\documentclass{article}
\usepackage{microtype}
\usepackage{graphicx}
\usepackage{subcaption}
\usepackage{booktabs} 

\usepackage{hyperref}
\usepackage{titletoc}
\usepackage{enumitem}
\usepackage{makecell} 

\usepackage[accepted]{icml2026}

\usepackage{amsmath,amssymb}
\usepackage{bbm}

\usepackage{mathtools}
\usepackage{amsthm}
\usepackage{float}
\newcommand{\op}{\mathbin{\circ}}
\usepackage[capitalize,noabbrev]{cleveref}

\theoremstyle{plain}
\newtheorem{theorem}{Theorem}[section]

\newtheorem{lemma}[theorem]{Lemma}

\theoremstyle{definition}

\theoremstyle{remark}

\usepackage[textsize=tiny]{todonotes}

\begin{document}

\twocolumn[
  \icmltitle{CADFit: Precise Mesh-to-CAD Program Generation with Hybrid Optimization}



  \icmlsetsymbol{equal}{*}

  \begin{icmlauthorlist}
    \icmlauthor{Ghadi Nehme}{yyy}
    \icmlauthor{Eamon Whalen}{comp}
    \icmlauthor{Faez Ahmed}{yyy}
  \end{icmlauthorlist}

  \icmlaffiliation{yyy}{Department of Mechanical Engineering,  Massachusetts Institute of Technology, Cambridge, MA 02139, USA}
  \icmlaffiliation{comp}{Siemens Digital Industries Software, Plano, TX 75024, USA}

  \icmlcorrespondingauthor{Ghadi Nehme}{ghadi@mit.edu}

  \icmlkeywords{Machine Learning, ICML}

  \vskip 0.3in
]



\printAffiliationsAndNotice{}  

\begin{abstract}
    Despite recent progress, recovering parametric CAD construction sequences from geometric input, such as meshes or point clouds, is a key challenge for design and manufacturing, as existing CAD reconstruction and generation methods are largely restricted to difficult-to-edit formats like meshes or Breps or editable simple sketch-and-extrude pipelines and low-complexity datasets. We introduce \textbf{CADFit}, a hybrid optimization-based CAD reconstruction framework that recovers complex, editable CAD construction sequences from meshes by incrementally fitting and validating parametric operations using geometric feedback. Our approach is distinguished by formulating reconstruction as an IoU-driven optimization over structured CAD programs and supporting a rich set of operations, including extrusions, revolutions, fillets, and chamfers. Experiments on multiple CAD benchmarks show that CADFit outperforms state-of-the-art mesh-to-CAD methods in volumetric Intersection-over-Union and Chamfer Distance, while substantially reducing the Invalid Ratio of reconstructed CAD programs, particularly for complex designs. We further present a multimodal pipeline that enables end-to-end reconstruction of CAD construction sequences from images by combining image-based geometry reconstruction with CADFit. By enabling accurate reconstruction of higher-complexity CAD models, CADFit provides a practical foundation for generating richer datasets and advancing future learning-based approaches to CAD reverse engineering. The code is available at: \href{https://github.com/ghadinehme/CADFit}{https://github.com/ghadinehme/CADFit}.
\end{abstract}

\begin{figure*}[t]
    \centering
    \includegraphics[width=0.9\linewidth]{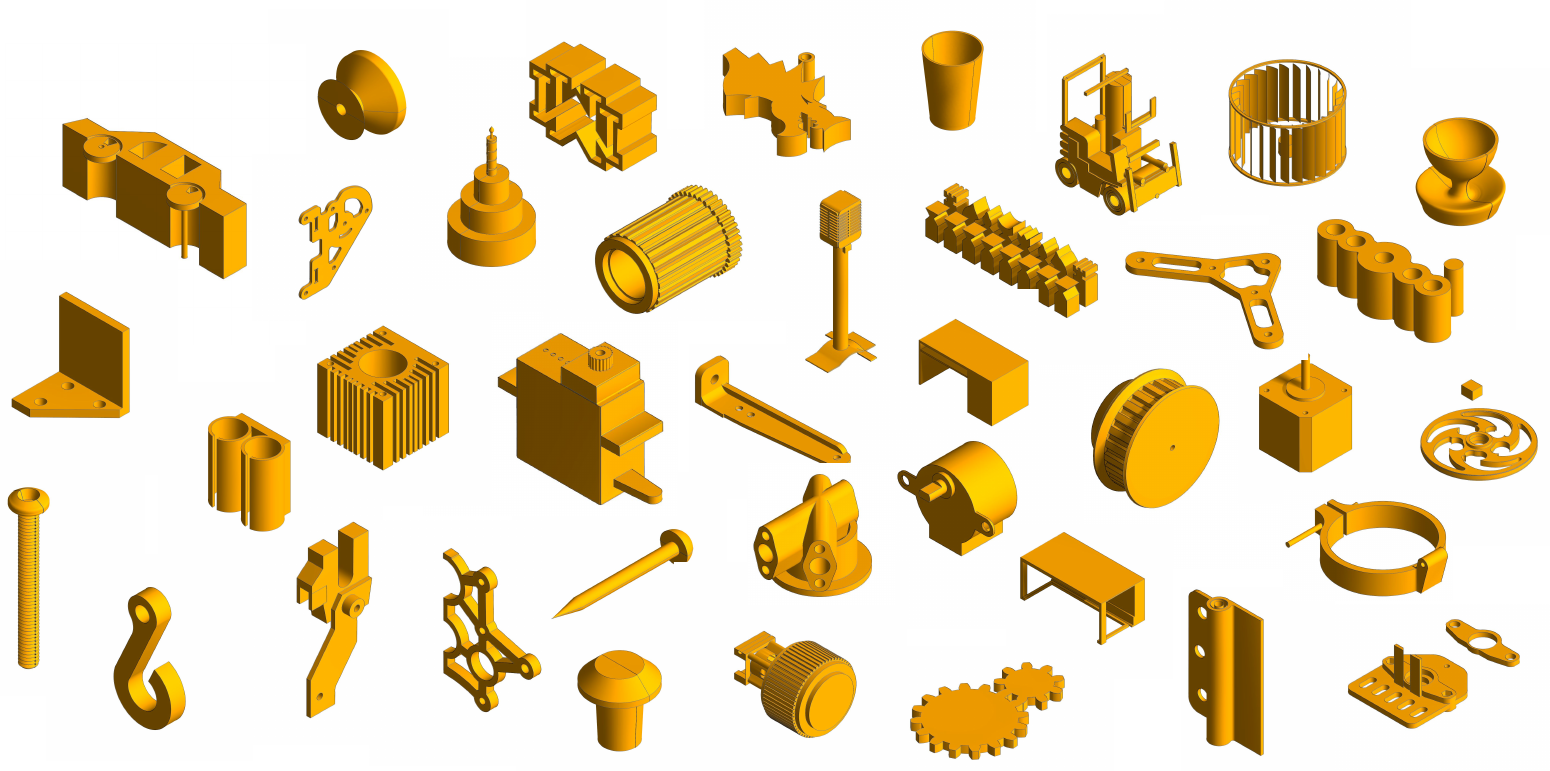}
    \caption{Mesh-to-CAD construction sequences generated by CADFit with varying complexity, including extrusions, revolutions, fillets, and chamfers.}
    \label{fig:showcase}
\end{figure*}

\section{Introduction}

Parametric CAD models represent geometry as structured construction sequences composed of geometric operations with continuous parameters. Such representations are widely used in engineering workflows because they enable compact shape specification and support downstream editing and constraint-based modeling \cite{camba2016parametric, li2010detecting}. Recovering parametric construction sequences from raw geometric data, however, remains a challenging inverse problem. A single target shape may admit many distinct but geometrically valid construction sequences \cite{lambourne2022reconstructing, willis2021fusion}, while each sequence must satisfy strict geometric constraints. Moreover, the problem requires jointly inferring discrete operation choices (e.g., feature type and ordering) and continuous parameters (e.g., dimensions and placements), making the search space highly non-convex and combinatorial.

Despite recent progress in CAD generation and reconstruction, existing AI-based methods remain limited in both operational expressiveness and geometric complexity. Most current CAD generation models focus on a narrow subset of operations, typically sketch-and-extrude pipelines, and are trained on datasets containing relatively simple shapes with few interacting features \cite{khan2024cad, alam2024gencad, wang2025cad}. As a consequence, these models struggle to represent realistic CAD designs involving multiple construction operations, complex boolean logic, and finishing operations such as fillets and chamfers. This limitation is not only a modeling issue but also a data bottleneck: the lack of accurate tools for reconstructing complex CAD construction sequences has constrained the scale and diversity of datasets available for learning.

In this paper, we introduce \textbf{CADFit}, a hybrid optimization-based method for reconstructing CAD construction sequences directly from geometry.
Given a watertight input mesh, CADFit recovers an ordered program composed of parametric operations drawn from a rich operator set, including extrusions, revolutions, fillets, and chamfers, and composed using Boolean union and cut.
Rather than predicting programs in a single forward pass, CADFit formulates reconstruction as an explicit optimization problem over executable CAD programs, guided by geometric feedback.

The core insight behind CADFit is that valid CAD programs can be recovered by tightly coupling geometric reasoning with program execution.
CADFit first extracts candidate sketch profiles from the input mesh, then generates a small set of geometrically consistent operation candidates by analyzing how parametric sweeps affect surface alignment.
These candidates are assembled into a construction sequence using an IoU-guided selection procedure that explicitly executes and validates each operation.
To reduce the combinatorial search space, CADFit additionally trains a sketch selection model that predicts which extracted sketch profiles are likely to contribute to the final construction sequence.
This learned prior is trained using labels automatically obtained from optimization-based reconstructions and is used only to filter candidates, without affecting correctness.

Importantly, CADFit decouples geometric perception from parametric reconstruction.
The method operates on surface geometry and can therefore be composed with upstream models that reconstruct meshes from images or point clouds.
This design enables a unified pipeline for mesh-, point-cloud-, and image-based CAD reconstruction without retraining or modifying the core algorithm.

We evaluate CADFit on multiple benchmarks, including DeepCAD \cite{wu2021deepcad}, Fusion360 Gallery \cite{willis2021fusion}, and ABC \cite{koch2019abc}, spanning a wide range of geometric complexity.
Across all benchmarks, CADFit achieves state-of-the-art performance in volumetric IoU and Chamfer Distance while maintaining zero invalid outputs.
Beyond geometric accuracy, CADFit recovers compact construction sequences that are often more concise than those found in existing CAD datasets, suggesting improved supervision for future learning-based approaches.

\noindent\textbf{Contributions.} We make the following contributions:
\begin{itemize}
  \item \textbf{Hybrid optimization for executable mesh-to-CAD programs.} We propose \textsc{CADFit}, an optimization-based framework that reconstructs \emph{executable} CadQuery/OpenCascade CAD construction sequences from a watertight input mesh by explicitly optimizing volumetric IoU under CAD-kernel validation, supporting extrude, revolve primitives, Boolean (union \& cut), and finishing operations (fillet \& chamfer).
  \item \textbf{Geometry-driven candidate generation and compact program search.} We introduce a reconstruction pipeline that (i) extracts sketch profiles from planar face clusters and slicing planes, (ii) generates a small set of high-quality extrude \& revolve candidates via one-sided Chamfer parameter sweeps and stable-interval selection, (iii) assembles compact sequences via IoU-guided backward pruning and iterative residual reconstruction with union \& cut, and (iv) accelerates search using a learned sketch prior trained from optimization-derived supervision.
  \item \textbf{Complexity-aware benchmarks and multimodal reconstruction.} We establish and evaluate on benchmarks spanning a wide range of CAD complexity (DeepCAD, Fusion360, and ABC stratified by complexity), and demonstrate consistent gains in IoU/Chamfer and near-zero invalid outputs for both mesh-to-CAD and an end-to-end image-to-CAD pipeline (image to mesh to \textsc{CADFit}).
\end{itemize}

\section{Related Work} 

\textbf{Learning-Based CAD Program Generation.}
There has been increasing interest in neural methods that generate or reconstruct CAD models as sequences of parametric operations.
Early approaches such as DeepCAD~\cite{wu2021deepcad} introduce a domain-specific language (DSL) for sketch-and-extrude programs and learn sequence autoencoders over this representation.
Subsequent methods extend this paradigm to richer inputs and outputs, including point clouds and meshes, and generate executable CAD code using structured decoders or code-generation models~\cite{xu2022skexgen,ren2022extrudenet,li2023secad, yu2025gencad,rukhovich2025cad,li2025caddreamer}.
While these methods produce editable outputs, they rely on amortized sequence prediction and often struggle with long, compositional programs, complex boolean interactions, and finishing operations.

Recent work further explores multimodal conditioning and large language models for CAD code generation~\cite{doris2025cad,kolodiazhnyi2025cadrille, alrashedy2025generating}.
These approaches broaden the range of inputs and tasks but inherit the brittleness of autoregressive program generation and do not explicitly optimize for geometric fidelity or program optimality.
In contrast, CADFit avoids direct sequence prediction and instead reconstructs construction programs through geometry-driven optimization.

\textbf{Geometry-Driven CAD Reverse Engineering and B-rep Reconstruction.}
Another line of work focuses on recovering CAD structure from geometric input using reverse engineering techniques.
Classical pipelines decompose shapes into primitives and fit analytic surfaces, but struggle to assemble consistent parametric models for complex designs.
Recent learning-assisted methods improve robustness by combining neural segmentation with analytic fitting, particularly for boundary representation (B-rep) reconstruction~\cite{jayaramansolidgen,liu2024point2cad,xu2024brepgen,lee2025brepdiff}.
These methods produce accurate surface topology and geometry, but do not recover ordered construction sequences or design intent encoded in parametric CAD programs.

Inverse CSG methods formulate reconstruction as a combinatorial or optimization-based search over boolean programs~\cite{du2018inversecsg, kania2020ucsg}.
Although such formulations improve validity guarantees, they typically operate over simple primitive sets and do not scale to sketch-based operations, revolutions, or finishing features.
CADFit differs by optimizing directly over parametric CAD operations while maintaining explicit geometric validation.

\textbf{Structured CAD Representations and Datasets.}
Large-scale datasets such as ABC, DeepCAD, and Fusion360 have enabled learning structured representations of CAD models, including sketches, feature trees, and operation graphs~\cite{koch2019abc,wu2021deepcad,willis2021fusion,xu2023hierarchical,uy2022point2cyl,man2026videocad,liu2024point2cad}.
More recent work studies neural representations of sketches, loops, and construction graphs to improve interpretability and controllability~\cite{xu2023hierarchical, karadeniz2025micadangelo}.
However, construction sequences in these datasets are often redundant or suboptimal, reflecting human modeling practices rather than canonical programs.

CADFit complements these efforts by reconstructing shorter, cleaner construction sequences through explicit optimization, which can serve as higher-quality supervision for future learning-based models.

\textbf{Multimodal 3D Reconstruction Pipelines.}
Advances in image- and point-based 3D reconstruction have enabled pipelines that combine perception models with downstream CAD inference.
Methods such as Pixel2Mesh~\cite{wang2018pixel2mesh}, Trellis ~\cite{xiang2025structured}, and Hunyuan3D~\cite{zhao2025hunyuan3d} recover high-quality surface geometry from images or sparse observations.
Several recent works combine such models with CAD reconstruction~\cite{li2025caddreamer}, but often require modality-specific adaptations or retraining.

CADFit adopts a decoupled design in which perception models recover surface geometry, while parametric reconstruction is performed via geometry-driven optimization, enabling robust CAD program recovery even from remeshed and smoothed reconstructed surfaces.

\begin{figure*}[t]
    \centering
    \includegraphics[width=\linewidth]{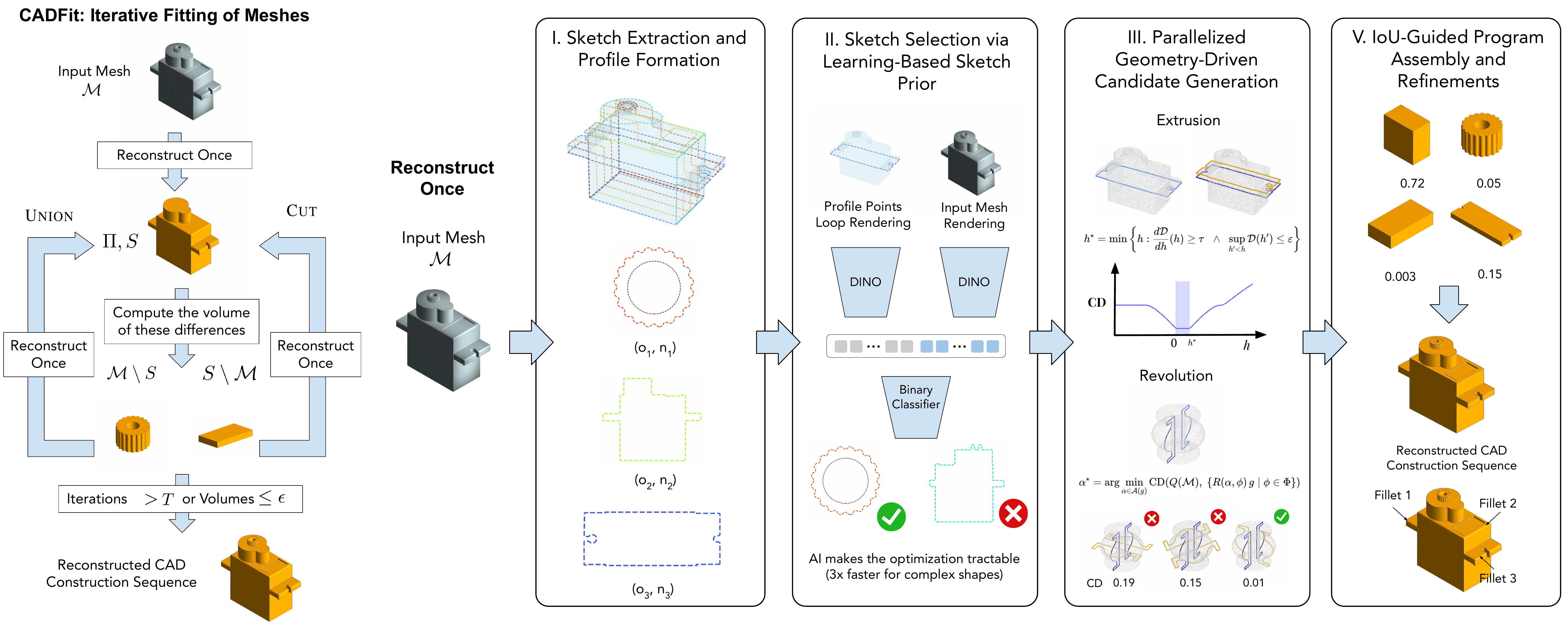}
    \caption{Overview of the CADFit Pipeline for Iterative Mesh-to-CAD Reconstruction}
    \label{fig:cadfit-pipeline}
\end{figure*}

\section{Method}
\label{sec:method}

We introduce \textbf{CADFit}, a hybrid optimization-based framework for recovering editable CAD construction programs from geometric input.
Given a watertight mesh $\mathcal{M} \subset \mathbb{R}^3$, CADFit reconstructs an ordered CAD program
\(
\Pi = (o_1,\dots,o_T)
\)
whose execution produces a solid closely approximating $\mathcal{M}$.
Each operation $o_t$ belongs to a fixed parametric operator set
\(
\mathcal{O} = \{\textsc{Extrude}, \textsc{Revolve}, \textsc{Fillet}, \textsc{Chamfer}, \textsc{Union}, \textsc{Cut}\}
\).
The complete procedure is summarized in Fig.~\ref{fig:cadfit-pipeline}.

\textbf{Problem Formulation.}
Let $\mathrm{Solid}(\Pi)$ denote the solid obtained by executing program $\Pi$.
We seek a valid program maximizing volumetric overlap with the target mesh:
\begin{equation}
\Pi^\star
=
\arg\max_{\Pi \in \mathcal{S}}
\mathrm{IoU}\big(\mathrm{Solid}(\Pi), \mathcal{M}\big),
\label{eq:objective}
\end{equation}
where $\mathcal{S}$ denotes the space of syntactically and geometrically valid CAD programs.
Boolean \textsc{Union} and \textsc{Cut} operations are applied sequentially to compose intermediate solids.

\textbf{CAD Program Representation.}
Each operation $o_t \in \Pi$ is represented as a tuple $o_t = (\tau_t, \theta_t, R_t)$, where $\tau_t$ is the operation type,
\(
\tau_t \in \{\textsc{Extrude},\textsc{Revolve},\textsc{Fillet},\textsc{Chamfer},\textsc{Union},\textsc{Cut}\},
\)
$\theta_t$ denotes the continuous operation parameters (e.g., extrusion height, revolution angle and axis, fillet radius, chamfer length),
and $R_t$ specifies the geometric references required by the operation.
Depending on $\tau_t$, $R_t$ may refer to a sketch profile, a set of edges, or one or more existing solids.

We implement CADFit using \textbf{CadQuery} \cite{cadquery_contributors_2025_14590990}, an OpenCascade-based parametric CAD library.
CAD programs $\Pi$ are represented directly using the CadQuery operation grammar, including \texttt{extrude}, \texttt{revolve}, \texttt{fillet}, \texttt{chamfer}, \texttt{union}, and \texttt{cut}, with sketches defined on explicit planes using line, arc, and circle primitives (Fig. \ref{fig:cadquery}).
Kernel feedback is used to reject invalid operations; operations that fail to execute are discarded.

\textbf{Pipeline Overview.}
CADFit reconstructs a CAD program through iterative mesh fitting. 
The method consists of an outer residual-reconstruction loop and an inner single-pass reconstruction procedure.
Given an input mesh $\mathcal{M}$, CADFit first runs a single-pass reconstruction to obtain an initial CAD program $\Pi_0$ and reconstructed solid $S_0$.
It then computes two complementary residuals: the under-reconstructed region
\(
R_k^{+} = \mathcal{M} \setminus S_k
\),
which contains target geometry missing from the current reconstruction, and the over-reconstructed region
\(
R_k^{-} = S_k \setminus \mathcal{M}
\),
which contains geometry present in the reconstruction but absent from the target.
CADFit applies the same single-pass reconstruction procedure to these residual meshes and combines the recovered residual programs with the current program using Boolean operations: under-reconstructed residuals are added with \textsc{Union}, while over-reconstructed residuals are removed with \textsc{Cut}.
This process can be repeated for a bounded number of iterations, progressively refining the CAD program.

The inner single-pass reconstruction procedure has four stages.
First, CADFit extracts candidate sketch profiles from the input mesh using planar face clusters and axis-aligned slicing planes.
Second, a learned sketch prior filters the extracted profiles to reduce the number of noisy or irrelevant candidates before expensive CAD-kernel evaluation.
Third, CADFit performs geometry-driven candidate generation by fitting \textsc{Extrude} and \textsc{Revolve} operations to the retained sketch profiles through parameter sweeps.
Finally, CADFit assembles a compact program in two stages: it first greedily selects candidates that improve IoU with the target mesh, and then applies IoU-guided backward pruning to remove candidates whose deletion does not reduce reconstruction quality.
This nested design allows CADFit to recover complex shapes by repeatedly fitting simpler residual geometries while keeping each single-pass reconstruction tractable and compact. (Figure \ref{fig:cadfit-pipeline})

\begin{figure*}[t]
    \centering
    \includegraphics[width=1\linewidth]{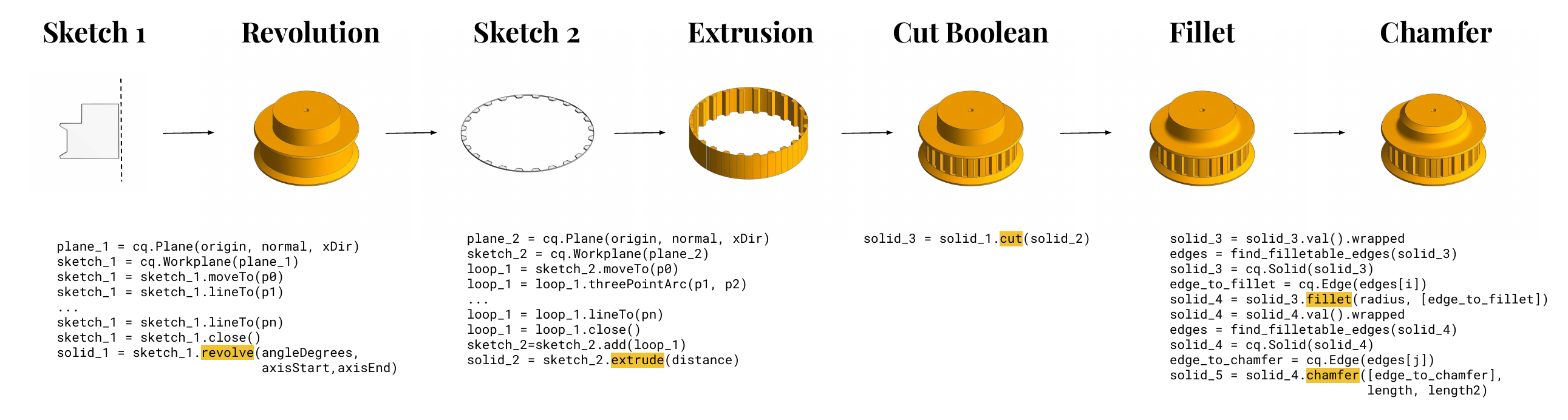}
    \caption{Executable CadQuery program reconstructed by CADFit, illustrating sketch-based primitives, extrusion and revolution operations, Boolean cut and union, and subsequent fillet and chamfer features.}
    \label{fig:cadquery}
\end{figure*}

\textbf{Single-Pass Reconstruction.}
The single-pass reconstruction procedure is the basic fitting primitive used both for the initial mesh and for residual meshes in later iterations.
Given a target mesh, it extracts sketch profiles, filters them with the learned sketch prior, generates parametric \textsc{Extrude} and \textsc{Revolve} candidates, and assembles a compact program by greedily selecting IoU-improving candidates followed by IoU-guided backward pruning.

\textbf{Sketch Extraction and Profile Formation.}
CADFit constructs a finite set of sketch profiles $\mathcal{G}=\{g_i\}$ from $\mathcal{M}$ using explicit geometric operators.
Candidate sketch planes are defined by (i) detected planar face clusters and (ii) a fixed set of axis-aligned slicing planes.
For each plane $(\mathbf{o},\mathbf{n})$, we compute local mesh sections and project resulting intersection curves onto the plane, yielding ordered 2D loops.
Degenerate loops are discarded, and remaining loops are grouped by polygon containment to form sketch profiles with outer boundaries and interior holes.
All geometric extraction steps are detailed in Appendix~\ref{app:sketchextraction}.

\textbf{Learning-Based Sketch Prior.}
To keep the sketch profile set $\mathcal{G}$ tractable, CADFit uses a learned prior
\(
f(\mathcal{M}, g) \in [0,1],
\)
which estimates whether a candidate profile $g$ is useful for reconstructing mesh $\mathcal{M}$.

The prior is trained from CADFit's pure optimization pipeline.
Profiles used in the recovered CAD program are labeled positive, while the remaining extracted profiles are labeled negative.
Instead of encoding the mesh and sketch separately, we form a joint visual input: CADFit renders multi-view images of $\mathcal{M}$ and overlays the candidate loop $g$ on each view.
These images are passed through a DINO encoder \cite{oquab2023dinov2} to obtain a global visual embedding, which is then fed to a lightweight 3-layer MLP with ReLU activations and a sigmoid output.
The resulting score
\(
p(g\mid\mathcal{M}) = f(\mathcal{M}, g)
\)
measures mesh--sketch compatibility.

At inference time, each profile is independently selected with probability $p(g\mid\mathcal{M})$.
This converts the prior into a stochastic filter: it favors sketches likely to appear in the final CAD program while preserving exploration.
The selected profiles are then fit with sketch primitives.
Additional details are provided in Appendix~\ref{app:learnedprior}.

\textbf{Geometry-Driven Candidate Generation.}
For each sketch profile $g \in \mathcal{G}$, CADFit generates a small discrete set of \textsc{Extrude} and \textsc{Revolve} candidates by solving a continuous parameter selection problem.
Let $Q(\mathcal{M})$ denote a surface point cloud sampled from $\mathcal{M}$.
For a candidate operation $o(g,\theta)$, we define a one-sided Chamfer objective
\begin{equation*}
\mathcal{D}(\theta)
=
\frac{1}{|P(g,\theta)|}
\sum_{\mathbf{x}\in P(g,\theta)}
\min_{\mathbf{y}\in Q(\mathcal{M})}
\|\mathbf{x}-\mathbf{y}\|_2^2,
\end{equation*}
where $P(g,\theta)$ are surface points induced by applying the operation with parameters $\theta$ to profile $g$.
Rather than minimizing $\mathcal{D}$ directly, we sweep $\theta$ over a bounded domain and identify stable parameter intervals preceding sharp increases in $\mathcal{D}$.
These intervals correspond to geometrically consistent operations that do not overshoot the target shape.
All parameter selection details are provided in Appendix~\ref{app:paramsearch}.

\textbf{IoU-Guided Program Assembly.}
Let $\mathcal{C}=\{c_j\}$ denote the set of all generated operation candidates.
CADFit assembles a base program using a forward greedy selection stage followed by backward pruning.
Starting from an empty program $\Pi=\emptyset$ and solid $S=\emptyset$, CADFit repeatedly adds the candidate that gives the largest positive IoU improvement:
\[
c^\star =
\arg\max_{c\in\mathcal{C}\setminus\Pi}
\left[
\mathrm{IoU}(S\cup c,\mathcal{M})
-
\mathrm{IoU}(S,\mathcal{M})
\right].
\]
If this improvement is positive, $c^\star$ is added to $\Pi$ and merged into $S$.
Forward selection terminates when no remaining candidate improves IoU.
After this forward selection stage, CADFit applies backward marginal pruning to remove redundant candidates.
Starting from the greedily selected program, candidates are iteratively removed according to their marginal removal effect
\[
\Delta^{-}(c\mid S)
=
\mathrm{IoU}(\mathrm{Solid}(\Pi \setminus \{c\}),\mathcal{M})
-
\mathrm{IoU}(S,\mathcal{M}),
\]
removing at each step the candidate with the largest $\Delta^{-}$.
Pruning stops when removing any remaining candidate would decrease IoU.
The resulting subset is compact and contains only candidates that contribute to reconstruction quality under union composition. The complete forward-selection and backward-pruning procedure is provided in Algorithm~\ref{alg:search} in Appendix~\ref{app:assembly_algo}.

\textbf{Iterative Residual Reconstruction.}
After the initial single-pass reconstruction, CADFit refines the program by repeatedly fitting residuals between the current reconstruction and the target mesh.
At iteration $k$, the under-reconstructed residual
\(
R_k^{+}=\mathcal{M}\setminus S_k
\)
captures missing target geometry, while the over-reconstructed residual
\(
R_k^{-}=S_k\setminus\mathcal{M}
\)
captures excess reconstructed geometry.
Each residual is reconstructed using the same single-pass procedure: the program recovered from $R_k^{+}$ is added to the current reconstruction with \textsc{Union}, while the program recovered from $R_k^{-}$ is subtracted with \textsc{Cut}.
Iterations terminate when residual volumes fall below a threshold $\epsilon$ or when a fixed iteration budget is reached.
The full procedure is given in Algorithm~\ref{alg:iterative}.

\begin{algorithm}[H]
  \caption{Iterative Residual CAD Program Fitting}
  \label{alg:iterative}
  \begin{algorithmic}
    \STATE {\bfseries Input:} mesh $\mathcal{M}$, tolerance $\epsilon$, max iterations $T$
    \STATE {\bfseries Output:} CAD program $\Pi^\star$
    \STATE $(\Pi,S)\leftarrow\textsc{ReconstructOnce}(\mathcal{M})$
    \FOR{$t=1$ {\bfseries to} $T$}
        \STATE $R^{+}\leftarrow\mathcal{M}\setminus S$, $R^{-}\leftarrow S\setminus\mathcal{M}$
        \IF{$\mathrm{Vol}(R^{+})\le\epsilon$ {\bfseries and} $\mathrm{Vol}(R^{-})\le\epsilon$}
            \STATE \textbf{break}
        \ENDIF
        \STATE Reconstruct $R^{+}$ and $R^{-}$ and combine via \textsc{Union}/\textsc{Cut}
    \ENDFOR
    \STATE \textbf{return} $\Pi^\star$
  \end{algorithmic}
\end{algorithm}

\textbf{Application to Multimodal CAD Reconstruction.} CADFit is formulated as a mesh-to-CAD reconstruction method but is agnostic to the source of the mesh.
By treating geometry as an intermediate representation, CADFit can be composed with upstream perception models that reconstruct surface geometry from images or point clouds.
For image-based inputs, we obtain a surface mesh using a pretrained image-to-3D model and apply standard post-processing to satisfy CADFit’s geometric assumptions.
The processed mesh is then passed directly to CADFit without modification of the reconstruction pipeline. Implementation details are provided in Appendices \ref{app:image2mesh} and \ref{app:trellis-hunyuan}. Importantly, CADFit is not retrained or adapted for different input modalities.
The same optimization procedure is applied unchanged to meshes originating from images, point clouds, or CAD exports, enabling a clean separation between perception and parametric reconstruction.
We evaluate the resulting end-to-end pipeline in Section~\ref{sec:experiments}.

\section{Experiments}
\label{sec:experiments}

We evaluate CADFit on two settings: \textbf{mesh-to-CAD} and \textbf{image-to-CAD}.
Across both settings, the goal is to recover an editable CAD construction program whose execution yields a solid that matches the target.

\subsection{Tasks and Baselines}
\textbf{Mesh-to-CAD.}
Given a watertight input mesh $\mathcal{M}$, methods must output a CAD construction sequence (or an equivalent executable program) that reconstructs $\mathcal{M}$.
We compare against three baselines: GenCAD-3D~\cite{yu2025gencad}, CAD-Recode~\cite{rukhovich2025cad}, and Cadrille~\cite{kolodiazhnyi2025cadrille}.
When a baseline does not support mesh input, we follow its standard preprocessing and provide point-cloud input.

\textbf{Image-to-CAD.}
Given a single RGB image, we first reconstruct an intermediate surface mesh using a pretrained image-to-3D model. We adopt Hunyuan3D \cite{zhao2025hunyuan3d} after comparing it against another state-of-the-art image-to-mesh method, Trellis \cite{xiang2025structured}, with an ablation study reported in Appendix~\ref{tab:abc_hunyuan_trellis_stats}. The reconstructed mesh is then post-processed to satisfy CADFit’s geometric assumptions by enforcing watertightness, applying Taubin smoothing \cite{taubin1995curve} to suppress high-frequency artifacts, and performing mesh decimation \cite{garland1997surface}.
All mesh-to-CAD baselines (GenCAD-3D, CAD-Recode, Cadrille) are run on these processed meshes.
We additionally compare against a direct image-to-CAD program generation baseline CAD-Coder~\cite{doris2025cad}.

\begin{figure*}[t]
    \centering
    \includegraphics[width=1\linewidth]{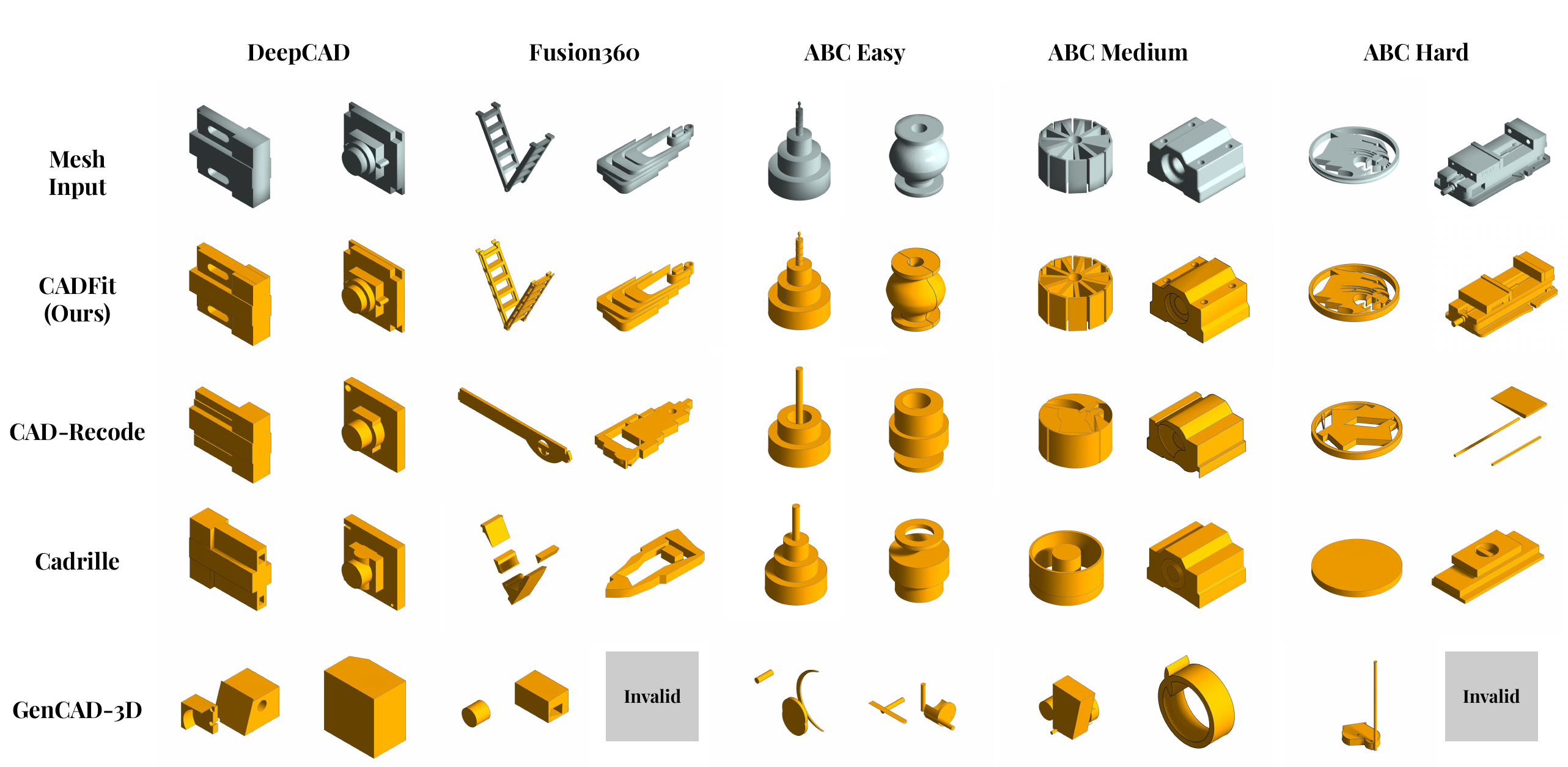}
    \caption{Qualitative mesh-to-CAD reconstruction results on DeepCAD, Fusion360, and ABC (easy, medium and hard subsets).}
    \label{fig:results-mesh}
\end{figure*}

\begin{table*}[t]
\centering
\caption{CAD construction sequence reconstruction results on DeepCAD and Fusion360, and on ABC stratified by CAD complexity. Improvement is computed relative to the strongest baseline in each column (max IoU / min CD / min IR).}
\label{tab:pc_mesh_results}
\resizebox{\textwidth}{!}{
\begin{tabular}{lccc ccc ccccccccc}
\toprule
 & \multicolumn{3}{c}{\textbf{DeepCAD}} 
 & \multicolumn{3}{c}{\textbf{Fusion360}}
 & \multicolumn{9}{c}{\textbf{ABC}} \\
\cmidrule(lr){2-4}
\cmidrule(lr){5-7}
\cmidrule(lr){8-16}

\textbf{Method}
& IoU$\uparrow$ & CD$\downarrow$ & IR$\downarrow$
& IoU$\uparrow$ & CD$\downarrow$ & IR$\downarrow$
& \multicolumn{3}{c}{Easy}
& \multicolumn{3}{c}{Medium}
& \multicolumn{3}{c}{Hard} \\
\cmidrule(lr){8-10}
\cmidrule(lr){11-13}
\cmidrule(lr){14-16}

 &
 &  &
 &  &  &
 & IoU$\uparrow$ & CD$\downarrow$ & IR$\downarrow$
 & IoU$\uparrow$ & CD$\downarrow$ & IR$\downarrow$
 & IoU$\uparrow$ & CD$\downarrow$ & IR$\downarrow$ \\
\midrule

GenCAD-3D
& 0.255 & 0.132 & 0.020
& 0.208 & 0.143 & 0.313
& 0.258 & 0.141 & 0.263
& 0.204 & 0.135 & 0.377
& 0.135 & 0.128 & 0.390 \\

CAD-Recode 
& 0.818 & 0.027 & 0.000
& 0.761 & 0.027 & 0.003
& 0.773 & 0.030 & 0.003
& 0.571 & 0.044 & 0.020
& 0.333 & 0.054 & 0.010 \\

Cadrille
& 0.872 & 0.020 & 0.007
& 0.817 & 0.019 & 0.003
& 0.808 & 0.028 & 0.003
& 0.653 & 0.032 & 0.017
& 0.438 & 0.040 & 0.013 \\

\midrule
\textbf{CADFit (Ours)}
& \textbf{0.964} & \textbf{0.016} & \textbf{0.000}
& \textbf{0.921} & \textbf{0.017} & \textbf{0.000}
& \textbf{0.908} & \textbf{0.022} & \textbf{0.000}
& \textbf{0.851} & \textbf{0.026} & \textbf{0.000}
& \textbf{0.662} & \textbf{0.032} & \textbf{0.000} \\

\textbf{Improvement (\%)}
& $\uparrow$ 10.6 & $\downarrow$ 19.3 & 0.0
& $\uparrow$ 12.7 & $\downarrow$ 10.2 & $\downarrow$ 100.0
& $\uparrow$ 12.4 & $\downarrow$ 21.4 & $\downarrow$ 100.0
& $\uparrow$ 30.3 & $\downarrow$ 18.8 & $\downarrow$ 100.0
& $\uparrow$ 51.1 & $\downarrow$ 20.0 & $\downarrow$ 100.0 \\

\bottomrule
\end{tabular}
}
\end{table*}

\begin{table*}[t]
\centering
\footnotesize
\caption{Image-to-CAD construction sequence reconstruction results on the ABC dataset subsets easy, medium and hard.}
\label{tab:image_to_cad_results}
\begin{tabular}{lccccccccc}
\toprule
 & \multicolumn{9}{c}{\textbf{ABC}} \\
\cmidrule(lr){2-10}

\textbf{Method}
& \multicolumn{3}{c}{Easy}
& \multicolumn{3}{c}{Medium}
& \multicolumn{3}{c}{Hard} \\
\cmidrule(lr){2-4}
\cmidrule(lr){5-7}
\cmidrule(lr){8-10}

 &
IoU$\uparrow$ & CD$\downarrow$ & IR$\downarrow$
& IoU$\uparrow$ & CD$\downarrow$ & IR$\downarrow$
& IoU$\uparrow$ & CD$\downarrow$ & IR$\downarrow$ \\
\midrule

GenCAD-3D
& 0.310 & 0.104 & 0.047
& 0.246 & 0.112 & 0.107
& 0.169 & 0.096 & 0.207 \\

CAD-Coder
& 0.405 & 0.086 & 0.013
& 0.2912 & 0.103 & 0.027
& 0.1710 & 0.104 & 0.053 \\

Cadrille
& 0.438 & 0.103 & 0.027
& 0.385 & 0.113 & 0.000
& 0.226 & 0.106 & 0.020 \\

CAD-Recode
& 0.437 & 0.063 & 0.000
& 0.391 & 0.071 & 0.000
& 0.237 & 0.074 & 0.040 \\

\midrule
\textbf{CADFit (Ours)}
& \textbf{0.669} & \textbf{0.036} & \textbf{0.000}
& \textbf{0.545} & \textbf{0.037} & \textbf{0.000}
& \textbf{0.380} & \textbf{0.041} & \textbf{0.000} \\

\textbf{Improvement (\%)}
& $\uparrow$ 52.9 & $\downarrow$ 42.9 & 0.0
& $\uparrow$39.4 & $\downarrow$ 47.9 & 0.0
& $\uparrow$60.3 & $\downarrow$ 44.6 & $\downarrow$ 100.0 \\
\bottomrule
\end{tabular}
\end{table*}

\subsection{Benchmarks}
We construct benchmarks spanning a wide range of geometric complexity.
Throughout, we use \emph{STEP face count} as a proxy for CAD complexity (a common and effective complexity measure \cite{contero2023quantitative}).

\textbf{DeepCAD (complex subset).}
We evaluate on the official DeepCAD test split and select the most complex CAD models according to STEP face count to avoid over-representing trivial primitives (e.g., cylinders and boxes).
The resulting DeepCAD test set contains 300 shapes.

\textbf{Fusion360 Gallery (uniform face-count).}
We sample 300 CAD models from Fusion360 Gallery such that the STEP face count is approximately uniform over a predefined range.
This benchmark stresses reconstruction across diverse geometries while controlling for complexity distribution.

\textbf{ABC (stratified by complexity).}
ABC contains shapes spanning from very simple to extremely complex CAD.
We partition ABC into three complexity windows using STEP face count:
$\text{Easy: } [1,15],\quad
\text{Medium: } [16,150],\quad
\text{Hard: } [151,1500],
$
and sample 300 shapes uniformly over face count within each window.
Shapes with more than 1500 faces are excluded to avoid extreme outliers.

\textbf{ABC Image-to-CAD benchmark.}
For image-to-CAD, we sample 150 shapes from each ABC window (Easy/Medium/Hard), render an isometric view per shape using Blender, and use these images as inputs (Fig. \ref{fig:abc-images}).

\subsection{Evaluation Metrics}

We evaluate using Chamfer Distance (CD), volumetric Intersection-over-Union (IoU), and Invalid Ratio (IR).
Predictions are rigidly aligned with a global scale to ground truth by minimizing symmetric CD prior to computing CD and IoU.
CD is the symmetric Chamfer Distance between surface point clouds, with ground-truth meshes normalized to $[-1,1]^3$.
IoU is the Intersection over Union between the generated and ground truth solids:
$\mathrm{IoU}(A,B)=\mathrm{Vol}(A\cap B)/\mathrm{Vol}(A\cup B)$.
IR measures the fraction of invalid outputs (e.g., failed CAD program execution).
Details are in Appendix~\ref{app:metrics}.

\subsection{Results and Discussion}
\textbf{Mesh-to-CAD.}
Table~\ref{tab:pc_mesh_results} reports quantitative mesh-to-CAD results on DeepCAD, Fusion360, and ABC, stratified by CAD complexity measured using STEP face count.
Across all benchmarks, CADFit consistently outperforms prior methods in IoU, Chamfer Distance, and Invalid Ratio.

Learning-based baselines such as GenCAD-3D and Cadrille exhibit limited generalization beyond their training distributions.
While these methods achieve reasonable performance on datasets similar to their training data, their reconstruction quality degrades sharply on more diverse benchmarks such as Fusion360 and on higher-complexity ABC subsets.
This behavior is particularly evident in the Hard ABC split, where GenCAD-3D and Cadrille frequently oversimplify geometry or fail to recover a valid sequence.

In contrast, CADFit maintains strong performance across datasets and complexity levels.
On the ABC Hard subset, CADFit more than doubles the median IoU of the strongest prior method (CAD-Recode) (Figure \ref{tab:pc_mesh_results_median}), demonstrating its ability to recover long and interacting construction sequences.
These gains highlight the importance of geometry-driven optimization and explicit program validation when reconstructing complex CAD models.
Qualitative comparisons in Fig.~\ref{fig:results-mesh} further show that CADFit more faithfully recovers thin structures, rotational primitives, and multi-step compositions that are often missed by purely learning-based approaches.

\textbf{Image-to-CAD.}
Table~\ref{tab:image_to_cad_results} reports end-to-end image-to-CAD results on the ABC subsets.
Despite noise and artifacts introduced during image-based reconstruction, the Hunyuan3D + preprocessing + CADFit pipeline achieves the best performance across all complexity levels.
This result underscores the advantage of decomposing image-to-CAD reconstruction into a multi-stage process, where each component addresses a well-defined subproblem: perception models recover approximate surface geometry, while CADFit performs structured parametric reconstruction.
Additional results and visualizations can be found in Appendix \ref{app:more-results}.

\begin{figure}[t]
    \centering
    \includegraphics[width=1\linewidth]{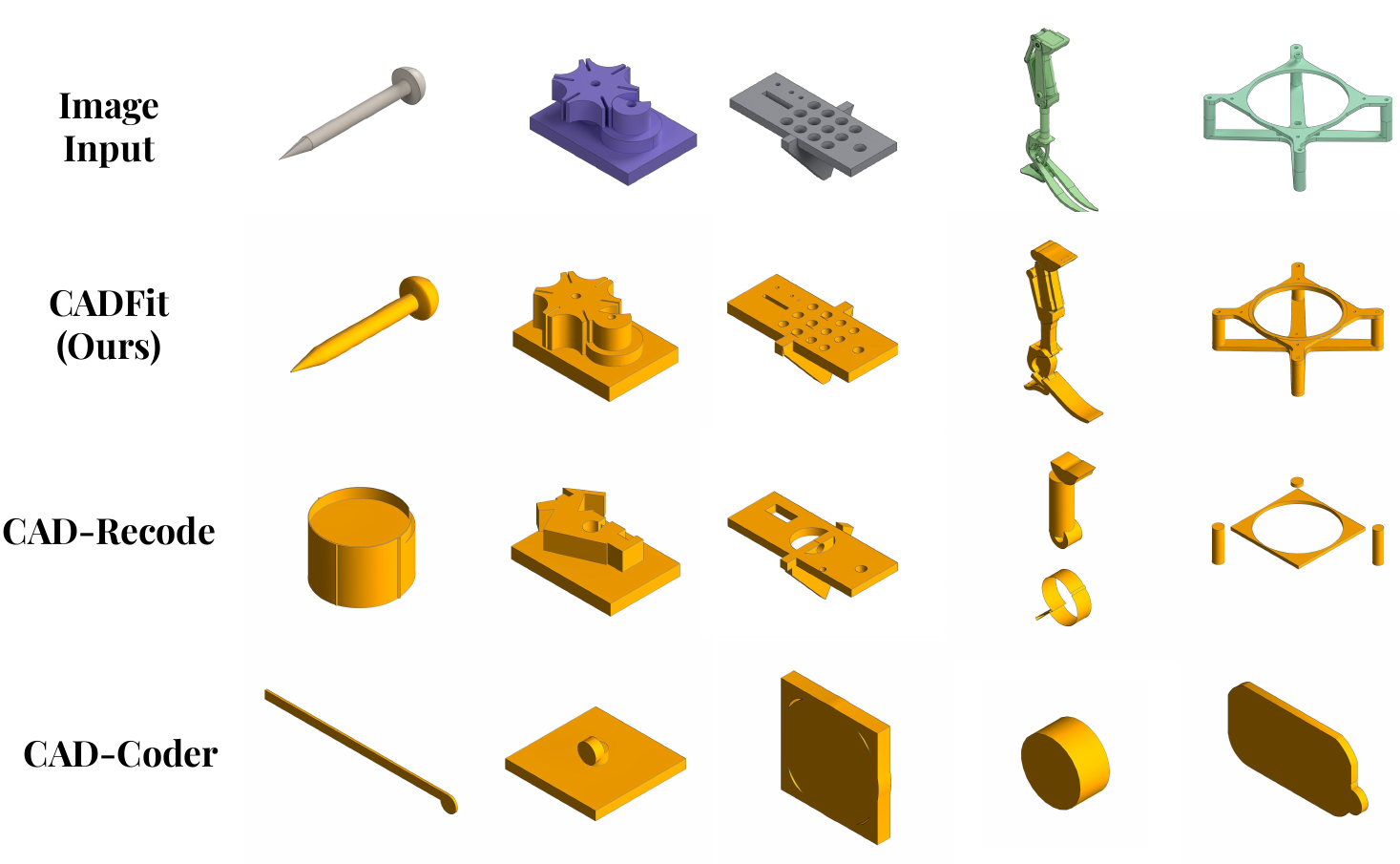}
    \caption{Qualitative image-to-CAD reconstruction results comparing CADFit with prior methods.}
    \label{fig:results-image}
\end{figure}

\textbf{Construction Sequence Quality.}
Beyond geometric accuracy, CADFit recovers compact construction sequences.
As shown in Table~\ref{tab:cadfit_win_breakdown}, in \textbf{95.1\%} of cases CADFit achieves higher IoU than competing methods, and in \textbf{69.8\%} of cases it simultaneously attains the highest IoU and the shortest valid sequence.
Moreover, in \textbf{85\%} of cases CADFit generates construction sequences that are shorter than the ground-truth DeepCAD programs while achieving \textbf{IoU $> 0.9$} (Figure \ref{fig:deepcad-cadfit}).

\begin{table}[H]
\centering
\footnotesize
\caption{Breakdown of CADFit wins across IoU and construction sequence length against Cadrille and CAD-Recode on the complex DeepCAD subset (N = 300).}
\label{tab:cadfit_win_breakdown}
\begin{tabular}{lccc}
\toprule
\textbf{Outcome} & \textbf{Percentage}\\
\midrule
Wins both IoU and length
 & 69.8\%\\

Wins IoU only
& 25.3\%\\

Wins length only
& 3.9\%\\

Wins neither
& 1.1\% \\

\bottomrule
\end{tabular}
\end{table}

This behavior arises from CADFit’s optimization-driven formulation, which retains only operations that improve geometric fidelity rather than replicating potentially redundant or suboptimal construction histories.
As a result, the recovered programs are easier to interpret and edit. Such programs may serve as improved training data for learning-based CAD models or as structured exploration policies for reinforcement learning approaches to CAD design.

\textbf{Contribution of the Learned Sketch Prior.}
CADFit uses learning only to filter candidate sketches, but this filter is essential for tractability.
Initial extraction yields hundreds of plausible profiles, many noisy, fragmented, or irrelevant.
Evaluating all of them with CAD-kernel calls is costly, so the no-prior baseline randomly samples 100 loops to keep optimization tractable.
The learned prior replaces this random budget with a ranked subset of the most useful sketches.

\begin{table}[H]
\centering
\caption{
Effect of the learned sketch prior on quality and runtime across ABC splits.
Under the same tractable candidate budget, the prior selects useful profiles instead of random loops, improving both scalability and reconstruction quality.
}
\label{tab:runtime_scaling}
\resizebox{\linewidth}{!}{
\begin{tabular}{lcccc}
\toprule
\textbf{ABC subset} 
& \textbf{IoU change} 
& \textbf{Speedup} 
& \makecell{\textbf{Time w/o}\\\textbf{prior (s)}} 
& \makecell{\textbf{Time w/}\\\textbf{prior (s)}} \\
\midrule
Easy   & $-0.0033$ & $1.42{\times}$ & $303.0$ & $213.3$ \\
Medium & $+0.0032$ & $1.62{\times}$ & $396.1$ & $244.4$ \\
Hard   & $+0.2285$ & $2.90{\times}$ & $903.7$ & $311.8$ \\
\bottomrule
\end{tabular}
}
\end{table}

The prior ranks profiles by their likelihood of appearing in the final CAD program.
Table~\ref{tab:runtime_scaling} shows speedups from $1.42{\times}$ on ABC-easy to $2.90{\times}$ on ABC-hard.
On the hardest split, it also raises IoU by $+0.2285$ because the prior selects high-value sketches within the candidate budget, whereas the no-prior setting samples loops randomly.
Thus, learning in CADFit acts as a guide rather than a generator: it preserves geometric optimization while replacing random search with a smaller, higher-quality candidate set.

\textbf{Robustness to Noisy Meshes and Preprocessing.}
CADFit works best on clean, watertight meshes, since residual refinement uses boolean complements. Image-based reconstructions, however, are often noisy, dense, non-watertight, or topologically flawed. CADFit can still run its initial pass on such inputs, but full residual refinement may require preprocessing. Noise mainly increases cost rather than causing direct failure. It creates fragmented sketch candidates and slows IoU scoring. Simple preprocessing, such as Taubin smoothing and decimation, improves both speed and quality. Table~\ref{tab:hunyuan_preprocess} shows this on Hunyuan3D meshes. Thus, CADFit is compatible with image-to-3D meshes, but preprocessing helps improve speed and robustness.

\begin{table}[H]
\centering
\caption{
Effect of lightweight preprocessing on image-based meshes.
}
\label{tab:hunyuan_preprocess}
\small
\begin{tabular}{lcc}
\toprule
\textbf{Input setting} & \textbf{IoU} & \textbf{Runtime (s)} \\
\midrule
Noisy mesh              & $0.876$ & $1255$ \\
Smoothed + decimated    & $0.914$ & $456$ \\
\bottomrule
\end{tabular}
\end{table}

\textbf{Hyperparameter Selection.}
To justify the default configuration used in our main experiments, we ablate key hyperparameters controlling candidate generation, sketch extraction, residual refinement, and parameter search. 
The selected defaults achieve the best reconstruction quality among the tested settings while maintaining practical runtime. 
Overall, these results show that our configuration is chosen based on empirical reconstruction quality and efficiency. 
Full results are provided in Appendix~\ref{app:hyperparam}.

\begin{figure}[H]
    \centering
    \includegraphics[width=\linewidth]{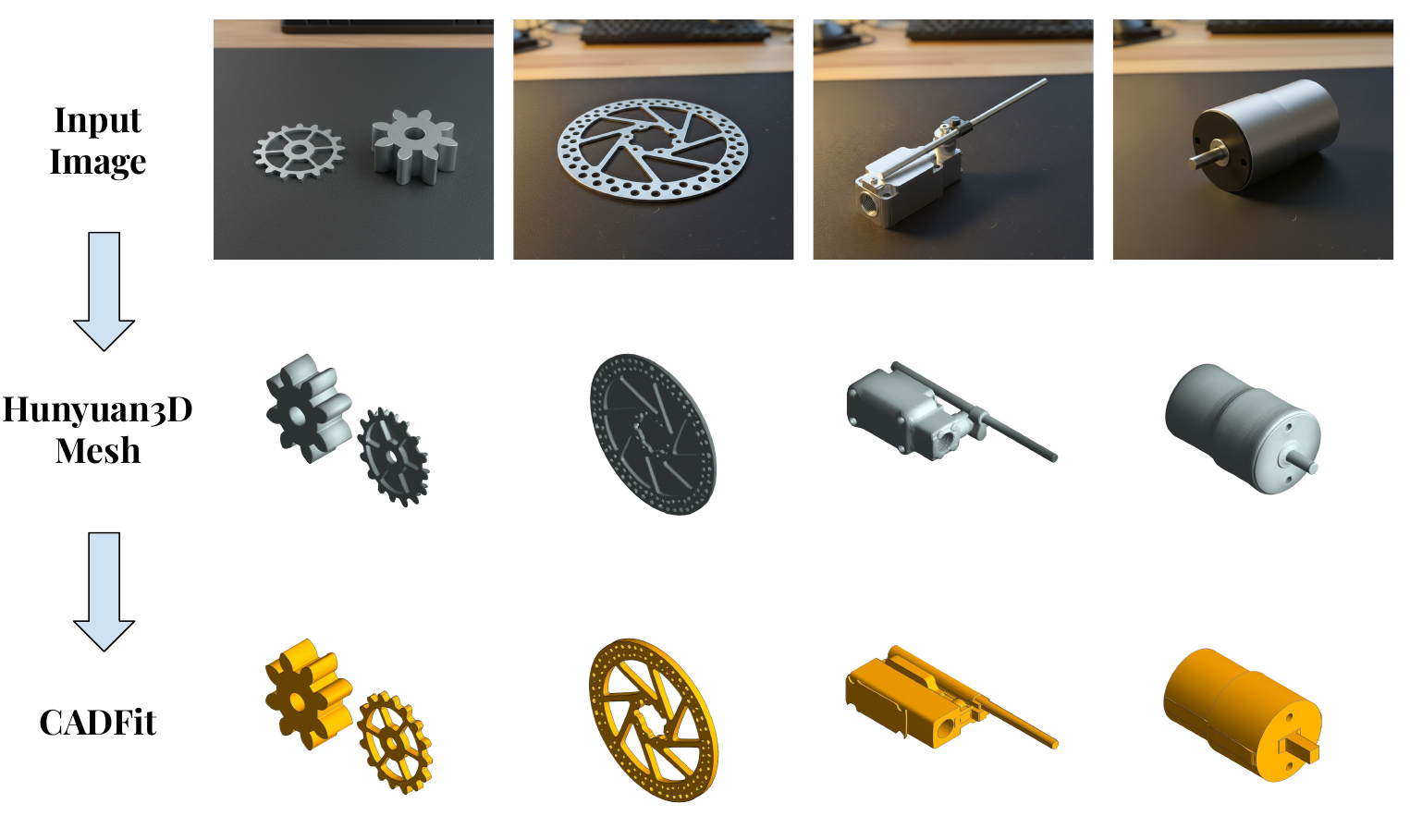}
    \caption{
    Photo-to-CAD reconstruction using Hunyuan3D and \textsc{CADFit}.
    Given realistic images of mechanical components, the pipeline reconstructs intermediate meshes and then recovers executable CAD construction sequences.
    }
    \label{fig:photo-to-cad}
\end{figure}

\textbf{Application: Photo-to-CAD Reconstruction.}
We show another application of CADFit by using it for photo-to-CAD reconstruction on realistic images of mechanical components.
We generate photo-style renderings from a component dataset~\cite{kim2020large} using Gemini~\cite{team2023gemini}, reconstruct intermediate meshes with Hunyuan3D, and recover executable CAD construction sequences with CADFit.
The results show that CADFit can recover structured programs beyond clean synthetic renderings, even when realistic inputs introduce mesh reconstruction artifacts.
Qualitative examples are shown in Fig.~\ref{fig:photo-to-cad}.

\section{Limitations}

CADFit trades inference speed for validity and reconstruction quality.
It is slower than feed-forward learned methods, which typically run in seconds, since CADFit requires minutes for iterative optimization and CAD-kernel validation (Appendix~\ref{app:more-results} for timings).
Its expressivity is also limited by the current operation set: extrusions, revolutions, fillets, and chamfers. This covers many engineering parts but can fail on shapes whose construction requires unsupported operations such as lofts, sweeps, or intersections. These unsupported operations also explain many of our observed failure cases. CADFit struggles with geometries dominated by smooth, high-curvature surfaces, since they often lack stable planar faces or slicing sections from which reliable sketches can be extracted (Appendix~\ref{app:failures}). Such failures are not simply optimization errors; they reflect a mismatch between the target shape and the sketch-and-operation vocabulary currently supported by CADFit. CADFit further depends on the quality of the input mesh.
Full residual refinement benefits from clean, watertight meshes because complement computation relies on robust boolean operations.
For noisy, highly tessellated, or non-watertight meshes from image-based reconstruction models, lightweight preprocessing such as smoothing and decimation is often needed to improve robustness and runtime.
Failures in upstream image-to-3D reconstruction can therefore propagate to the recovered CAD program.

\section{Conclusion}

We introduced \textbf{CADFit}, an optimization-based method for reconstructing structured, executable CAD construction sequences from geometric input.
By formulating reconstruction as iterative fitting with explicit program validation, CADFit reliably recovers complex parametric operations while guaranteeing validity.
Across mesh-to-CAD and image-to-CAD benchmarks, CADFit outperforms prior methods in geometric accuracy, robustness, and construction sequence quality, with especially large gains on high-complexity designs. CADFit adopts a decoupled design in which perception models recover surface geometry and parametric reconstruction is performed via geometry-driven optimization.
This allows the same pipeline to operate across meshes and images and enables improvements in upstream reconstruction to transfer directly to CAD inference.
Beyond precise geometric reconstruction, CADFit produces compact construction sequences that are often shorter than those in existing CAD datasets and prior methods. Future work includes extending the operator set to lofts, sweeps, pattern features and intersections, using CADFit to build higher-quality and complexity CAD datasets, and leveraging it as a structured policy for training learning-based or RL CAD agents with geometry feedback.

\newpage

\section*{Impact Statement}

This work aims to improve efficiency and structure in CAD reconstruction by recovering compact and precise construction programs from geometric observations. By enabling reusable and editable CAD representations, the proposed method may support faster engineering iteration and reduce reliance on manual modeling workflows. We do not foresee any immediate ethical or societal risks beyond those commonly associated with automation in engineering design tools.

\section*{Acknowledgments}
This work was supported by a gift from Altair to the DeCoDE Lab.


\bibliography{example_paper}
\bibliographystyle{icml2026}

\newpage
\onecolumn
\appendix

\section*{Table of Contents for Appendices}

\begin{enumerate}[label=\textbf{\Alph*.}, labelsep=1em]

\item \textbf{Additional Results and Visualizations} \hfill \pageref{app:more-results}

\item \textbf{Metric Computation} \hfill \pageref{app:metrics}

\item \textbf{Image-to-Mesh Preprocessing} \hfill \pageref{app:image2mesh}

\item \textbf{Ablation Study: Trellis vs Hunyuan3D} \hfill \pageref{app:trellis-hunyuan}

\item \textbf{Benchmark Construction} \hfill \pageref{app:benchmarks}

\item \textbf{Failure Cases} \hfill \pageref{app:failures}

\item \textbf{Theoretical Properties of IoU-Guided Assembly} \hfill \pageref{app:theory_cadfit}

\item \textbf{Hyperparameter Sensitivity and Robustness} \hfill \pageref{app:hyperparam}

\item \textbf{Sketch Extraction and Profile Formation} \hfill \pageref{app:sketchextraction}

\item \textbf{Sketch Primitive Fitting} \hfill \pageref{app:sketchprimitives}

\item \textbf{Geometry-Driven Operation Parameter Search} \hfill \pageref{app:paramsearch}

\item \textbf{IoU-Guided Construction Sequence Search} \hfill \pageref{app:assembly_algo}

\item \textbf{Learning-Based Sketch Prior} \hfill \pageref{app:learnedprior}

\item \textbf{Fillet and Chamfer Recovery} \hfill \pageref{app:fillet}

\end{enumerate}

\newpage

\section{Additional Results and Visualizations}
\label{app:more-results}
\begin{figure}[H]
    \centering
    \includegraphics[width=1\linewidth]{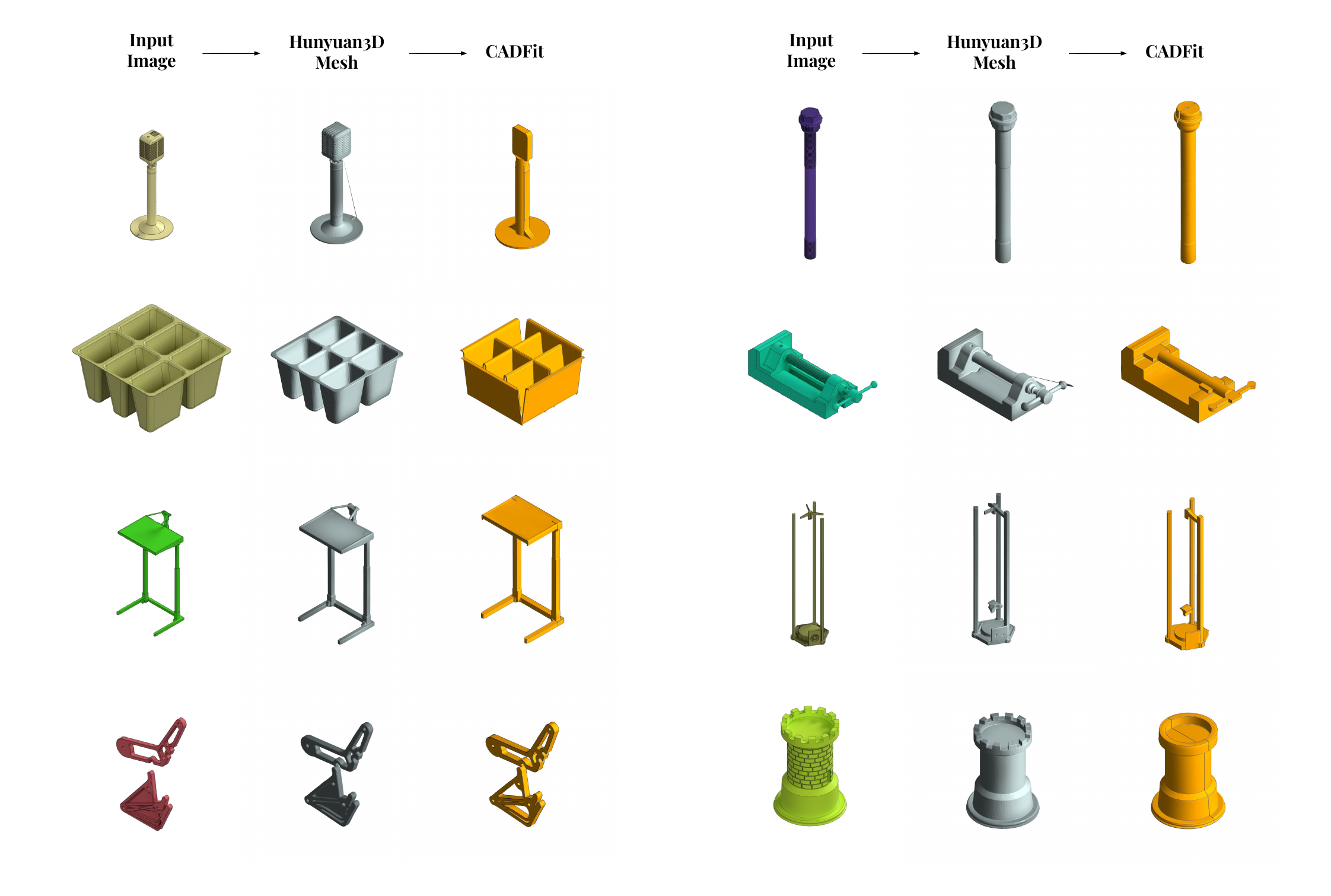}
    \caption{Qualitative Image-to-CAD results on the ABC Hard subset, showing the input image, intermediate Hunyuan3D mesh, and final CADFit reconstruction.}
    \label{fig:abc-hard-image-to-cad}
\end{figure}

\newpage

\begin{figure}[H]
    \centering
    \includegraphics[width=1\linewidth]{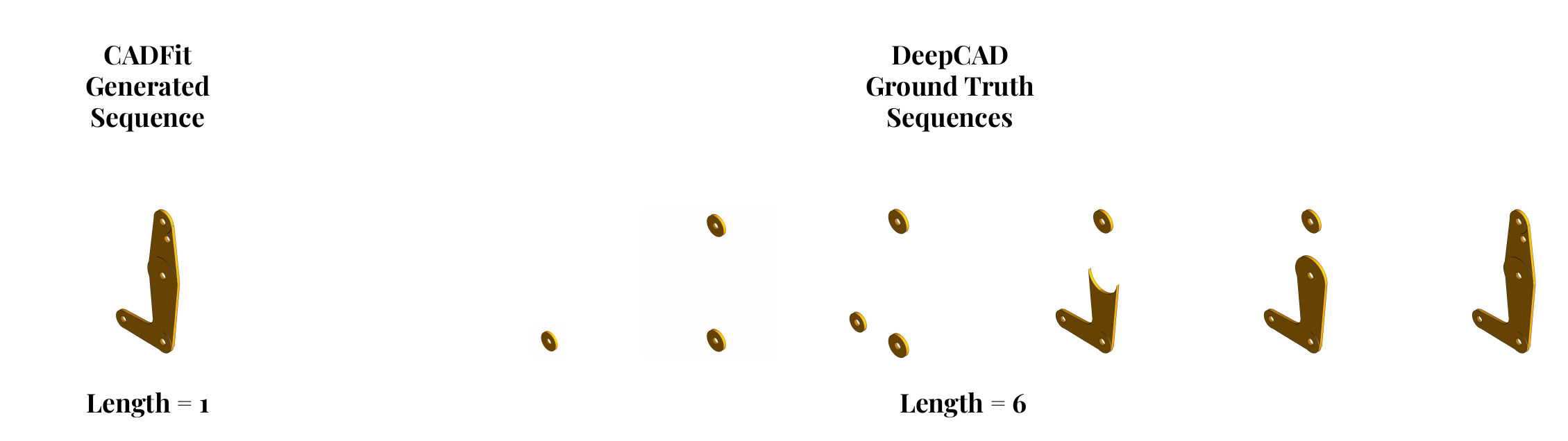}
    \caption{Example comparing \textsc{CADFit} output with DeepCAD ground-truth construction sequences. While DeepCAD programs may require multiple operations to form the target shape, \textsc{CADFit} frequently reconstructs the same geometry using fewer operations, yielding more concise CAD programs.}
    \label{fig:deepcad-cadfit}
\end{figure}

\begin{table}[H]
\centering
\caption{Median mesh-to-CAD reconstruction results on DeepCAD, Fusion360, and ABC stratified by CAD complexity.}
\label{tab:pc_mesh_results_median}
\resizebox{\textwidth}{!}{
\begin{tabular}{lcc cc cccccccc}
\toprule
 & \multicolumn{2}{c}{\textbf{DeepCAD}} 
 & \multicolumn{2}{c}{\textbf{Fusion360}}
 & \multicolumn{6}{c}{\textbf{ABC}} \\
\cmidrule(lr){2-3}
\cmidrule(lr){4-5}
\cmidrule(lr){6-11}

\textbf{Method}
& IoU$\uparrow$ & CD$\downarrow$
& IoU$\uparrow$ & CD$\downarrow$
& \multicolumn{2}{c}{Easy}
& \multicolumn{2}{c}{Medium}
& \multicolumn{2}{c}{Hard} \\
\cmidrule(lr){6-7}
\cmidrule(lr){8-9}
\cmidrule(lr){10-11}

 &  &  
 &  &  
 & IoU$\uparrow$ & CD$\downarrow$
 & IoU$\uparrow$ & CD$\downarrow$
 & IoU$\uparrow$ & CD$\downarrow$ \\
\midrule

GenCAD-3D \cite{yu2025gencad}
& 0.225 & 0.129
& 0.163 & 0.143
& 0.232 & 0.136
& 0.177 & 0.130
& 0.104 & 0.118 \\

CAD-Recode \cite{rukhovich2025cad}
& 0.900 & 0.018
& 0.874 & 0.017
& 0.885 & 0.019
& 0.627 & 0.029
& 0.267 & 0.043 \\

Cadrille \cite{kolodiazhnyi2025cadrille}
& 0.926 & 0.017
& 0.882 & 0.016
& 0.909 & 0.018
& 0.718 & 0.024
& 0.414 & 0.037 \\

\midrule
\textbf{CADFit (Ours)}
& \textbf{0.987} & \textbf{0.015}
& \textbf{0.979} & \textbf{0.013}
& \textbf{0.972} & \textbf{0.015}
& \textbf{0.908} & \textbf{0.020}
& \textbf{0.709} & \textbf{0.026} \\

\bottomrule
\end{tabular}
}
\end{table}

\begin{table}[H]
\centering
\caption{Average runtime (seconds per shape) of CADFit across benchmarks.}
\label{tab:runtime}
\begin{tabular}{lc}
\toprule
\textbf{Benchmark} & \textbf{Mean Runtime (s)} \\
\midrule
\multicolumn{2}{l}{\textbf{Mesh-to-CAD}} \\
DeepCAD (complex subset) & 242.3 \\
Fusion360 & 235.0 \\
ABC Easy & 303.0 \\
ABC Medium & 396.1 \\
ABC Hard & 903.7 \\
\midrule
\multicolumn{2}{l}{\textbf{Image-to-CAD (ABC)}} \\
ABC Easy (Hunyuan3D) & 299.2 \\
ABC Medium (Hunyuan3D) & 343.5 \\
ABC Hard (Hunyuan3D) & 357.9 \\
\bottomrule
\end{tabular}
\end{table}

\newpage

\section{Metric Computation}
\label{app:metrics}

\subsection{Shape Alignment}
\label{app:alignment}

Predicted CAD models may differ from ground-truth shapes by rigid transformations or global scale.
To ensure that evaluation reflects geometric fidelity rather than coordinate conventions, we align predicted shapes to the ground truth prior to metric computation.

Given point clouds $P_{\mathrm{gt}}$ and $P_{\mathrm{pred}}$ sampled from the ground-truth and predicted surfaces, respectively, we estimate a similarity transform
$(\mathbf{t}, \mathbf{R}, s)$ consisting of translation, rotation, and isotropic scale by minimizing symmetric Chamfer Distance:
\begin{equation*}
\min_{\mathbf{t},\mathbf{R},s}
\;
\mathrm{CD}\big(s\,\mathbf{R} P_{\mathrm{pred}} + \mathbf{t},\; P_{\mathrm{gt}}\big).
\end{equation*}
Optimization is performed using derivative-free Powell search.
The recovered transform is applied to the predicted mesh prior to computing CD and IoU.

\subsection{Chamfer Distance}
\label{app:cd}

Chamfer Distance is computed between surface point clouds sampled uniformly from the predicted and ground-truth meshes.
Specifically, given point sets $P$ and $\hat{P}$, we compute the symmetric Chamfer Distance
\begin{equation*}
\mathrm{CD}(P,\hat{P})
=
\frac{1}{2}
\left(
\frac{1}{|P|}\sum_{\mathbf{x}\in P}\min_{\mathbf{y}\in \hat{P}}\|\mathbf{x}-\mathbf{y}\|_2
+
\frac{1}{|\hat{P}|}\sum_{\mathbf{y}\in \hat{P}}\min_{\mathbf{x}\in P}\|\mathbf{x}-\mathbf{y}\|_2
\right).
\end{equation*}
All ground-truth meshes are normalized to the box $[-1,1]^3$, so reported CD values are directly comparable across shapes and datasets.

\subsection{Volumetric IoU Computation}
\label{app:iou}

Volumetric IoU is computed directly using Boolean operations between the predicted and ground-truth solids. 
Given two watertight meshes or CAD solids $A$ and $B$, we first compute their Boolean intersection and union:
\[
A \cap B,
\qquad
A \cup B.
\]
The IoU is then defined as the ratio between the volume of the intersected geometry and the volume of the union geometry:
\begin{equation*}
\mathrm{IoU}(A,B)
=
\frac{\mathrm{Vol}(A\cap B)}
{\mathrm{Vol}(A\cup B)}.
\end{equation*}

\newpage

\section{Image-to-Mesh Preprocessing}
\label{app:image2mesh}

For image-to-CAD experiments, meshes reconstructed from RGB images may contain noise, holes, or non-manifold artifacts that violate CADFit’s geometric assumptions.
We therefore apply a fixed preprocessing pipeline prior to CAD reconstruction.

Specifically, meshes produced by Hunyuan3D are post-processed as follows:
(i) watertightness is enforced using surface repair;
(ii) Taubin smoothing \cite{taubin1995curve} is applied to suppress high-frequency reconstruction noise while preserving overall shape;
(iii) mesh decimation is performed to obtain a well-conditioned triangulation with bounded face count.

All preprocessing hyperparameters are fixed across datasets and complexity levels.
A quantitative analysis of the impact of preprocessing on downstream CAD reconstruction is provided in this appendix.

\section{Ablation Study: Comparing Trellis and Hunyuan3D on image-to-mesh}
\label{app:trellis-hunyuan}

\begin{table}[H]
\centering
\caption{Image-to-mesh reconstruction quality on the ABC dataset for Hunyuan3D and Trellis.}
\label{tab:abc_hunyuan_trellis_stats}
\begin{tabular}{lcccc}
\toprule
\multicolumn{5}{c}{\textbf{ABC Easy}} \\
\cmidrule(lr){1-5}
\textbf{Method}
& IoU$_{\text{mean}}\uparrow$ & IoU$_{\text{median}}\uparrow$
& CD$_{\text{mean}}\downarrow$ & CD$_{\text{median}}\downarrow$ \\
\midrule
Hunyuan3D & 0.692 & 0.766 & 0.031 & 0.030 \\
Trellis   & 0.202 & 0.103 & 0.043 & 0.030 \\
\midrule
\multicolumn{5}{c}{\textbf{ABC Medium}} \\
\cmidrule(lr){1-5}
\textbf{Method}
& IoU$_{\text{mean}}\uparrow$ & IoU$_{\text{median}}\uparrow$
& CD$_{\text{mean}}\downarrow$ & CD$_{\text{median}}\downarrow$ \\
\midrule
Hunyuan3D & 0.602 & 0.680 & 0.031 & 0.026 \\
Trellis   & 0.189 & 0.152 & 0.050 & 0.040 \\
\midrule
\multicolumn{5}{c}{\textbf{ABC Hard}} \\
\cmidrule(lr){1-5}
\textbf{Method}
& IoU$_{\text{mean}}\uparrow$ & IoU$_{\text{median}}\uparrow$
& CD$_{\text{mean}}\downarrow$ & CD$_{\text{median}}\downarrow$ \\
\midrule
Hunyuan3D & 0.426 & 0.443 & 0.032 & 0.029 \\
Trellis   & 0.187 & 0.147 & 0.055 & 0.045 \\
\bottomrule
\end{tabular}
\end{table}

\newpage

\section{Benchmark Construction}
\label{app:benchmarks}

We construct evaluation benchmarks spanning a wide range of CAD complexity.
Complexity is measured using STEP face count, which correlates strongly with construction sequence depth and operation diversity.

For ABC, shapes are stratified into Easy, Medium, and Hard subsets based on face count thresholds, and samples are drawn uniformly over linear face count within each bin.
For DeepCAD and Fusion360, we sample from official test splits and publicly available galleries, respectively, following the protocol described in Section~\ref{sec:experiments}.

\begin{figure}[H]
    \centering
    \includegraphics[width=1\linewidth]{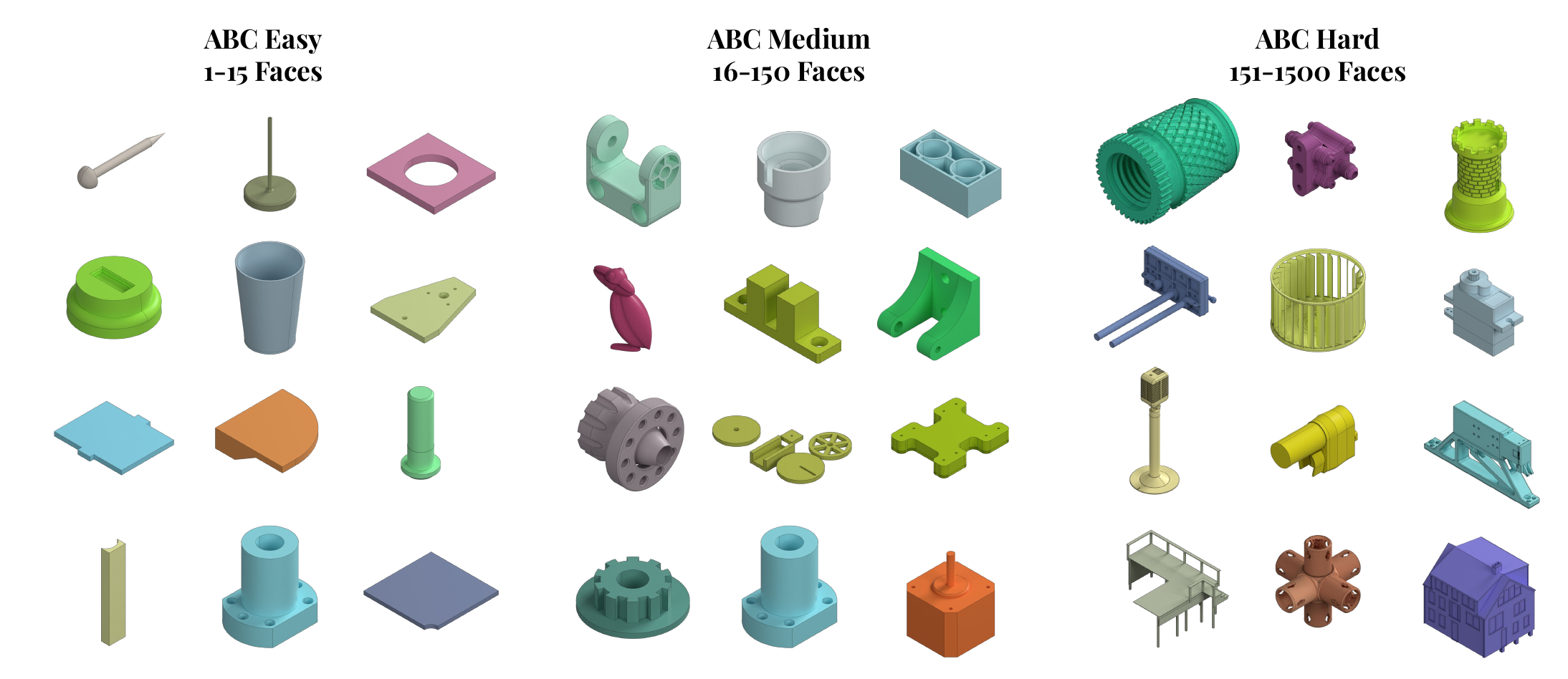}
    \caption{Qualitative visualizations of ABC CAD models stratified by complexity, rendered using Blender}
    \label{fig:abc-images}
\end{figure}
\newpage

\section{Failure Cases}
\label{app:failures}

\begin{figure}[H]
    \centering
    \includegraphics[width=1\linewidth]{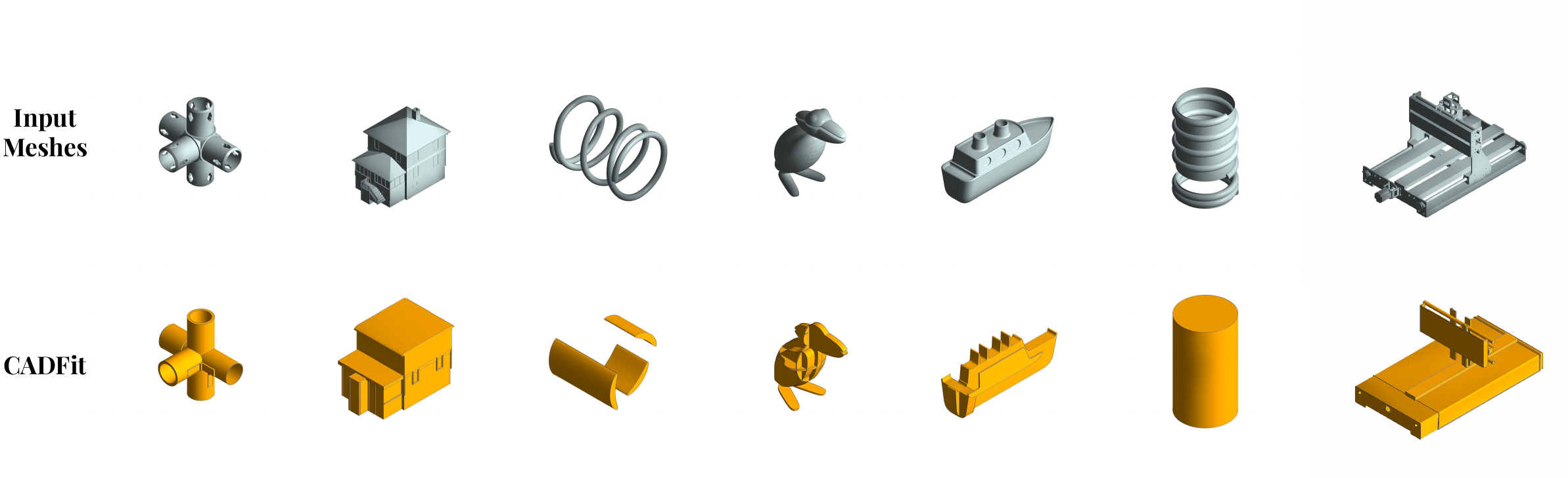}
    \caption{Failure cases of \textsc{CADFit} caused by unsupported operations and curved geometries.}

    \label{fig:failures}
\end{figure}

Figure~\ref{fig:failures} illustrates the main characteristic failure modes of \textsc{CADFit}.
Some failures arise from shapes whose construction relies on operations not currently supported by our kernel, such as lofts or sweeps.
Despite this limitation, \textsc{CADFit} often recovers close geometric approximations using combinations of supported primitives.

More challenging cases occur when the ground-truth geometry is dominated by smoothly curved surfaces, for which no stable sketch can be extracted from planar faces or slicing sections.
In such scenarios, the method struggles to recover precise constructions.
A promising direction to address these cases is to project the shape onto candidate planes and use the resulting contours as sketches, which could significantly extend coverage to curved geometries.

\newpage

\section{Theoretical Properties of IoU-Guided Program Assembly with \textsc{Union}/\textsc{Cut}}
\label{app:theory_cadfit}

This section provides a formal explanation for why CADFit’s IoU-guided residual assembly (i) monotonically improves geometric fidelity (in symmetric-difference error / IoU) up to modeling and candidate-resolution limits, and (ii) produces compact, irredundant programs via backward IoU pruning, under standard feature-based CAD assumptions.

\subsection{Setup}

Let $\mathcal{M}\subset\mathbb{R}^3$ be the target solid and let $S$ denote the current reconstructed solid. Define uncovered and overshoot residuals
\[
R^{+}(S) = \mathcal{M}\setminus S,
\qquad
R^{-}(S) = S\setminus \mathcal{M}.
\]
Define the normalized symmetric-difference error
\begin{equation}
E(S) \;=\; \frac{\mathrm{Vol}(\mathcal{M}\triangle S)}{\mathrm{Vol}(\mathcal{M})}
\;=\;
\frac{\mathrm{Vol}(R^{+}(S))+\mathrm{Vol}(R^{-}(S))}{\mathrm{Vol}(\mathcal{M})}.
\label{eq:E_def}
\end{equation}
Let
\[
a(S)=\frac{\mathrm{Vol}(R^{+}(S))}{\mathrm{Vol}(\mathcal{M})},\qquad
b(S)=\frac{\mathrm{Vol}(R^{-}(S))}{\mathrm{Vol}(\mathcal{M})},
\]
so that $E(S)=a(S)+b(S)$. IoU admits the identity
\begin{equation}
\mathrm{IoU}(S,\mathcal{M})
=
\frac{\mathrm{Vol}(S\cap\mathcal{M})}{\mathrm{Vol}(S\cup\mathcal{M})}
=
\frac{1-a(S)}{1+b(S)}.
\label{eq:iou_ab}
\end{equation}
Moreover, for all $a,b\ge 0$ with $a\le 1$,
\begin{equation}
1-\mathrm{IoU}(S,\mathcal{M})
=
1-\frac{1-a}{1+b}
=\frac{a+b}{1+b}
\le a+b
=E(S).
\label{eq:iou_to_E}
\end{equation}
Thus, decreasing $E(S)$ (symmetric-difference error) implies increasing IoU up to a monotone rescaling.

\subsection{Assumptions}

\textbf{Assumption A1 (Feature-based realizability up to a floor).}
There exists a feature-based CAD program whose supported operations (sketch-based \textsc{Extrude}/\textsc{Revolve} combined with Boolean \textsc{Union}/\textsc{Cut} and optional \textsc{Fillet}/\textsc{Chamfer}) can approximate $\mathcal{M}$ up to a modeling error floor $\varepsilon_{\mathrm{model}}\ge 0$:
\begin{equation}
\inf_{S\in\mathcal{S}_{\mathrm{CADFit}}}
\frac{\mathrm{Vol}(\mathcal{M}\triangle S)}{\mathrm{Vol}(\mathcal{M})}
\;\le\;
\varepsilon_{\mathrm{model}},
\label{eq:A1}
\end{equation}
where $\mathcal{S}_{\mathrm{CADFit}}$ denotes solids expressible by the operation set considered.

\textbf{Assumption A2 (Residual-aligned sketch availability).}
When run on the current residual geometry, the sketch extraction procedure proposes at least one plane/section whose induced candidate operations have nontrivial overlap with any residual region of sufficient volume. In particular, candidate generation is not required to be optimal, only to propose sketches that can ``touch'' the residual to enable improvement.

\textbf{Assumption A3 (Existence of a residual-improving candidate).}
At iteration $t$, for the current solid $S_t$, the candidate generator produces at least one update $(c_t,\op_t)$ with $\op_t\in\{\cup,\setminus\}$ such that
\begin{equation}
E(S_t \ \op_t\ c_t)
\;\le\;
\rho\,E(S_t) + \varepsilon_{\mathrm{cand}},
\qquad 0<\rho<1,\ \varepsilon_{\mathrm{cand}}\ge 0.
\label{eq:A3}
\end{equation}
Here $\varepsilon_{\mathrm{cand}}$ captures candidate-resolution limits (e.g., finite parameter sweeps) and $\rho$ captures a constant-factor progress condition.

\textbf{Assumption A4 (IoU-monotone acceptance).}
The update executed by CADFit at each iteration is chosen to have non-decreasing IoU, i.e.,
\[
\mathrm{IoU}(S_{t+1},\mathcal{M}) \ge \mathrm{IoU}(S_t,\mathcal{M}),
\]
equivalently (by~\eqref{eq:iou_ab}), it does not increase $E(\cdot)$ beyond monotone rescaling.

\subsection{Geometric Refinement with Overshoot Correction via \textsc{Cut}}

\begin{theorem}[Geometric refinement under IoU-guided residual assembly]
\label{thm:precision_cut}
Under Assumptions A1--A4, the residual procedure produces solids $\{S_t\}_{t=0}^T$ satisfying
\begin{equation}
E(S_T)
\;\le\;
\rho^T E(S_0) \;+\; \frac{\varepsilon_{\mathrm{cand}}}{1-\rho},
\label{eq:geom_conv_final}
\end{equation}
and consequently
\begin{equation}
\mathrm{IoU}(S_T,\mathcal{M})
\;\ge\;
1 - \rho^T E(S_0) - \frac{\varepsilon_{\mathrm{cand}}}{1-\rho}.
\label{eq:iou_conv_final}
\end{equation}
\end{theorem}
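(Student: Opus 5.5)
The plan is to establish a one-step contraction inequality for the normalized symmetric-difference error $E$ defined in \eqref{eq:E_def}, unroll it by induction over the $T$ residual iterations, and then convert the error bound into an IoU bound using \eqref{eq:iou_to_E}.

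\textbf{Step 1 (one-step recursion).} The aim is to show that for every $t$,
\[
E(S_{t+1}) \le \rho\, E(S_t) + \varepsilon_{\mathrm{cand}} .
\]
Assumption A3 supplies a candidate update $(c_t,\op_t)$ satisfying this bound for $S_t \op_t c_t$. It remains to argue that the update CADFit actually executes is at least as good in $E$ as this candidate. I would read A3 together with A4 as saying that the executed update is the residual-improving candidate itself, or one selected from a pool containing it with no larger error. Concretely, I would take $S_{t+1}=S_t \op_t c_t$ (union for under-reconstructed residuals, cut for overshoot), so that A3 applies directly.

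A4 then serves to guarantee that this update is not rejected. This holds whenever $E$ strictly decreases in the relevant coordinate. A union with a subset of $R^+$ lowers $a$ and leaves $b$ unchanged, and a cut of a subset of $R^-$ lowers $b$ and leaves $a$ unchanged. By \eqref{eq:iou_ab}, IoU is increasing under either move.

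\textbf{Step 2 (unrolling).} By induction on $T$,
\[
E(S_T)\le \rho^T E(S_0)+\varepsilon_{\mathrm{cand}}\sum_{k=0}^{T-1}\rho^k .
\]
Since $0<\rho<1$, the geometric sum is bounded by $1/(1-\rho)$, which gives \eqref{eq:geom_conv_final}.

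\textbf{Step 3 (IoU).} Apply \eqref{eq:iou_to_E} with $a=a(S_T)$ and $b=b(S_T)$. This gives $1-\mathrm{IoU}(S_T,\mathcal{M})\le E(S_T)$, and combining with Step 2 yields \eqref{eq:iou_conv_final}. Assumptions A1 and A2 enter only as justification that A3 is plausible; A1 is consistent with a floor of the form $\varepsilon_{\mathrm{cand}}/(1-\rho)\gtrsim\varepsilon_{\mathrm{model}}$. The formal chain does not need them.

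\textbf{Main obstacle.} The delicate point is Step 1, namely the link between A3, which is phrased in $E$, and A4, which is phrased in IoU. IoU is not a monotone function of $E=a+b$ alone, since it depends on $a$ and $b$ separately. Consequently, an IoU-maximizing selection need not minimize $E$, and an $E$-reducing candidate could in principle fail to be IoU-monotone.

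My first approach is the residual-alignment argument above: each executed update reduces only one of $a$ or $b$, and along such moves both quantities decrease together. If a general update (one that changes both $a$ and $b$) must be covered, I would fall back on an explicit stipulation that the accepted update satisfies the A3 bound. Reading A3 as a property of the executed update is the natural interpretation of ``the candidate generator produces'' in the residual loop.
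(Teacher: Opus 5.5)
Your proof is correct and follows the paper's route: A3 gives the one-step bound $E(S_{t+1})\le\rho E(S_t)+\varepsilon_{\mathrm{cand}}$, unrolling with the geometric series gives the error bound, and $1-\mathrm{IoU}\le E$ converts it into the IoU bound. Your Step 1 is more careful than the paper's, which transfers the A3 bound to the executed update by claiming that IoU-monotonicity (A4) means $E$ is not worsened. That claim is false, as you note, since IoU depends on $a$ and $b$ separately, and it would be too weak to give a $\rho$-contraction anyway, so your explicit identification of $S_{t+1}$ with the A3 candidate is exactly the assumption the paper's argument tacitly needs.
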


\begin{proof}
By Assumption A3, there exists at iteration $t$ a candidate update $(c_t,\op_t)$ such that
\[
E(S_t \ \op_t\ c_t) \le \rho E(S_t) + \varepsilon_{\mathrm{cand}}.
\]
By Assumption A4, CADFit selects an update with non-decreasing IoU, i.e., it does not worsen $E(\cdot)$ in the sense of the monotone relationship between IoU and $E$ given by~\eqref{eq:iou_ab}. Therefore, the executed update also satisfies
\[
E(S_{t+1}) \le \rho E(S_t) + \varepsilon_{\mathrm{cand}}.
\]
Unrolling the recurrence gives
\[
E(S_T) \le \rho^T E(S_0) + \varepsilon_{\mathrm{cand}}\sum_{k=0}^{T-1}\rho^k
= \rho^T E(S_0) + \varepsilon_{\mathrm{cand}}\frac{1-\rho^T}{1-\rho}
\le \rho^T E(S_0) + \frac{\varepsilon_{\mathrm{cand}}}{1-\rho},
\]
which proves~\eqref{eq:geom_conv_final}. Finally, applying~\eqref{eq:iou_to_E} yields~\eqref{eq:iou_conv_final}.
\end{proof}

\textbf{Why \textsc{Cut} corrects overshoot.}
Suppose a \textsc{Union} update adds extraneous volume $X \subset (c\setminus\mathcal{M})$. Then $X \subset R^{-}(S)$ by definition, i.e., it becomes overshoot residual. A subsequent \textsc{Cut} update $S\leftarrow S\setminus c'$ with $X\subseteq c'$ removes it and decreases $b(S)$, illustrating that overshoot introduced by unions is correctable through cuts, consistent with the residual formulation.

\subsection{Compactness via Backward IoU Pruning}

Let $\Pi$ denote the assembled program (sequence/multiset of operations, including whether each operation is applied as \textsc{Union} or \textsc{Cut}), and let $\mathrm{Solid}(\Pi)$ denote the CAD-kernel solid obtained by executing $\Pi$.
Backward IoU pruning repeatedly removes an operation whose removal yields the largest IoU gain (or smallest IoU drop), and terminates when any removal would decrease IoU.

\begin{lemma}[Irredundancy of the pruned program]
\label{lem:irredundant}
Let $\widehat{\Pi}$ be the program returned by backward IoU pruning and $\widehat{S}=\mathrm{Solid}(\widehat{\Pi})$.
Then for every operation $o\in\widehat{\Pi}$,
\[
\mathrm{IoU}(\mathrm{Solid}(\widehat{\Pi}\setminus\{o\}),\mathcal{M})
<
\mathrm{IoU}(\widehat{S},\mathcal{M}),
\]
i.e., removing any single operation strictly decreases IoU.
\end{lemma}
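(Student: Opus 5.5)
The lemma follows from the termination condition of backward IoU pruning, provided we check that the procedure actually terminates. The plan is to formalize pruning as a finite sequence of programs $\Pi^{(0)}\supsetneq\Pi^{(1)}\supsetneq\cdots$. Here $\Pi^{(0)}$ is the assembled program. At step $k$, we compute for every $o\in\Pi^{(k)}$ the marginal removal effect
\[
\Delta^{-}(o\mid \Pi^{(k)}) = \mathrm{IoU}(\mathrm{Solid}(\Pi^{(k)}\setminus\{o\}),\mathcal{M}) - \mathrm{IoU}(\mathrm{Solid}(\Pi^{(k)}),\mathcal{M}).
\]
The procedure removes a maximizer whenever $\max_o \Delta^{-}\ge 0$, and halts otherwise. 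I will read the stopping rule ``terminates when any removal would decrease IoU'' literally: the procedure halts exactly when $\Delta^{-}(o\mid\Pi^{(k)})<0$ for every $o$. Under this reading, removals with $\Delta^{-}=0$ are still performed, since they do not decrease IoU.

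First, termination. Each step strictly decreases $|\Pi^{(k)}|$ and $|\Pi^{(0)}|<\infty$, so the loop runs at most $|\Pi^{(0)}|$ times. This bound also covers the zero-gain removals that the literal reading allows. The output is $\widehat\Pi=\Pi^{(K)}$ for some $K$.

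Second, identify the output with the halting state. The procedure stops at $K$ only when no removal is admissible, so $\max_{o\in\widehat\Pi}\Delta^{-}(o\mid\widehat\Pi)<0$. In the degenerate case $\widehat\Pi=\emptyset$ the claim holds vacuously. This is exactly the strict inequality $\mathrm{IoU}(\mathrm{Solid}(\widehat\Pi\setminus\{o\}),\mathcal{M})<\mathrm{IoU}(\widehat S,\mathcal{M})$ for all $o\in\widehat\Pi$.

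The main obstacle is making the stopping rule and the semantics of $\widehat\Pi\setminus\{o\}$ precise rather than any computation. There are two issues.

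\begin{itemize}
\item \textbf{Stopping rule.} If the implementation stopped when the best gain is $\le 0$, a zero-change removal could survive, and only the non-strict inequality would follow. I will therefore state explicitly that ties are removed, and that the finiteness argument above still applies to them.
\item \textbf{Executability.} Removing an operation from a program mixing \textsc{Union}, \textsc{Cut} and finishing operations must yield an executable program. I will define $\mathrm{Solid}(\Pi\setminus\{o\})$ as the solid obtained by executing the remaining operations in their original order, with references to $o$ dropped. If execution fails, I assign it IoU $-\infty$; such an $o$ is never removed and satisfies the strict inequality trivially.
\end{itemize}

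With these conventions the lemma is immediate from the halting condition.
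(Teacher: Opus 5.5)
Your proposal is correct and follows the paper's argument. As in the paper, the strict inequality is read off directly from the halting condition of backward pruning, and your tie-handling convention matches Algorithm~\ref{alg:search}, which removes $o^\star$ whenever $\mathrm{IoU}(S^{-}_{o^\star},\mathcal{M})\ge\mathrm{IoU}(S,\mathcal{M})$; your termination and executability remarks only make explicit details the paper leaves implicit.
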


\begin{proof}
Backward pruning terminates only when no operation can be removed without decreasing IoU. Hence, at termination, every $o\in\widehat{\Pi}$ satisfies the strict inequality above.
\end{proof}

\begin{lemma}[Minimality among IoU-equivalent subsequences of the assembled program]
\label{lem:minimality}
Let $\widehat{\Pi}$ be the result of backward pruning applied to an initial assembled program $\Pi_{\mathrm{init}}$.
There is no strict subsequence $\Pi'\subset \widehat{\Pi}$ such that
\[
\mathrm{IoU}(\mathrm{Solid}(\Pi'),\mathcal{M})=\mathrm{IoU}(\mathrm{Solid}(\widehat{\Pi}),\mathcal{M}).
\]
\end{lemma}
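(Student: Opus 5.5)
The plan is a proof by contradiction that reduces the multi-operation case to the single-operation case covered by Lemma~\ref{lem:irredundant}. Suppose there is a strict subsequence $\Pi'\subset\widehat{\Pi}$ with $\mathrm{IoU}(\mathrm{Solid}(\Pi'),\mathcal{M})=\mathrm{IoU}(\widehat{S},\mathcal{M})$. Write the removed operations as $D=\widehat{\Pi}\setminus\Pi'=\{o_1,\dots,o_k\}$ with $k\ge 1$, listed in program order.

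\textbf{Case $k=1$.} Here $\Pi'=\widehat{\Pi}\setminus\{o_1\}$. Lemma~\ref{lem:irredundant} gives $\mathrm{IoU}(\mathrm{Solid}(\Pi'),\mathcal{M})<\mathrm{IoU}(\widehat{S},\mathcal{M})$, which contradicts the assumed equality immediately.

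\textbf{Case $k\ge 2$.} I would argue by induction on $k$, along the chain
\[
\widehat{\Pi}=\Pi^{(0)}\supset\Pi^{(1)}\supset\cdots\supset\Pi^{(k)}=\Pi',
\qquad \Pi^{(j)}=\widehat{\Pi}\setminus\{o_1,\dots,o_j\}.
\]
I would fix one removed operation $o\in D$ and compare the two solids $\mathrm{Solid}(\Pi')$ and $\mathrm{Solid}(\Pi'\cup\{o\})$. To do this I would express the IoU through the identity~\eqref{eq:iou_ab}, $\mathrm{IoU}=(1-a)/(1+b)$. The effect of reinserting $o$ is then a volume exchange between the uncovered residual $R^{+}$ and the overshoot residual $R^{-}$:
\begin{itemize}[leftmargin=*]
\item For a \textsc{Union} operation $o$, the exchanged volume is its exclusive region $o\setminus\mathrm{Solid}(\text{others})$.
\item For a \textsc{Cut} operation $o$, it is the region that $o$ alone removes.
\end{itemize}
The aim is a "no-restoration" inequality: if reinserting $o$ into the full program $\widehat{\Pi}\setminus\{o\}$ strictly helps (Lemma~\ref{lem:irredundant}), then reinserting $o$ into any smaller program $\Pi'$ also strictly helps. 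That would give $\mathrm{IoU}(\mathrm{Solid}(\Pi'))<\mathrm{IoU}(\mathrm{Solid}(\Pi'\cup\{o\}))$. Iterating this along the chain would give $\mathrm{IoU}(\mathrm{Solid}(\Pi'))<\mathrm{IoU}(\widehat{S})$, contradicting the assumed equality.

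\textbf{Main obstacle.} This no-restoration step is where I expect the difficulty, because IoU is neither monotone nor submodular in general.
\begin{itemize}[leftmargin=*]
\item For unions, the exclusive region of $o$ only grows when other operations are deleted. However, the split of that region into target volume versus overshoot volume can change sign in its effect on $(1-a)/(1+b)$.
\item With cuts, deleting an earlier union can make a later cut vacuous.
\end{itemize}
My first attempt would be to work with the symmetric-difference error $E$ from~\eqref{eq:E_def} in the union-only regime. Using $\Delta E=\mathrm{Vol}(u_o\setminus\mathcal{M})-\mathrm{Vol}(u_o\cap\mathcal{M})$, where $u_o$ is the exclusive region of $o$, I would try to show that the sign of $\Delta E$ is preserved when passing to subprograms. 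I would then handle a \textsc{Cut} operation by viewing it as a union acting on the complement. If this inequality cannot be established without further structure, the fallback is to make explicit the property the argument actually needs: the IoU gain contributed by each retained operation is non-increasing as the program grows. Under that property the chain argument above closes the proof of Lemma~\ref{lem:minimality}.
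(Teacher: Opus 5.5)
Your case $k=1$ is fine, but the step you flag as the main obstacle is more than hard. The no-restoration inequality is false even for union-only programs, and so is the lemma as stated. This is therefore a genuine gap that no argument from Lemma~\ref{lem:irredundant} alone can close.

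\textbf{Counterexample.} Write $J(o_{i_1},\dots,o_{i_m})$ for the IoU with $\mathcal{M}$ of the solid produced by the listed operations. Let $\mathrm{Vol}(\mathcal{M})=10$ and take three \textsc{Union} operations:
\begin{itemize}
\item $o_1\subset\mathcal{M}$ of volume $5$;
\item $o_2=P_2\cup Q_2\cup D$ and $o_3=P_3\cup Q_3\cup D$, with $P_i\subset\mathcal{M}$ and $Q_i$, $D$ outside $\mathcal{M}$;
\item $\mathrm{Vol}(P_i)=\mathrm{Vol}(Q_i)=1$, $\mathrm{Vol}(D)=2$, and $o_1,P_2,P_3,Q_2,Q_3,D$ with pairwise disjoint interiors.
\end{itemize}
Two L-shaped extrusions whose common part lies outside the target realize $o_2$ and $o_3$. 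Then
\[
J(o_1,o_2,o_3)=\tfrac{7}{14}=\tfrac12,\qquad J(o_1,o_2)=J(o_1,o_3)=\tfrac{6}{13},\qquad J(o_2,o_3)=\tfrac{2}{14},\qquad J(o_1)=\tfrac{5}{10}=\tfrac12 .
\]
Backward pruning started from $\Pi_{\mathrm{init}}=(o_1,o_2,o_3)$ halts immediately, because every single removal strictly lowers IoU. So $\widehat{\Pi}=\Pi_{\mathrm{init}}$, yet the strict subsequence $\Pi'=(o_1)$ attains the same IoU.

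This also breaks your no-restoration step directly. Reinserting $o_2$ gains $\tfrac{1}{26}$ on top of $(o_1,o_3)$ but loses $\tfrac{1}{26}$ on top of $(o_1)$. Once $o_3$ is gone, the pure-overshoot block $D$ joins the exclusive region of $o_2$.

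The paper's one-line proof breaks at exactly the same place. It asserts that some $o\in\widehat{\Pi}\setminus\Pi'$ can be removed \emph{by itself} without changing IoU. Here $o_2$ and $o_3$ are individually indispensable but jointly removable.

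Your $E$-based attempt has an additional mismatch. IoU $=(1-a)/(1+b)$ is not a monotone function of $E=a+b$; inequality~\eqref{eq:iou_to_E} is only one-sided. So the strict IoU drop in Lemma~\ref{lem:irredundant} gives no sign for $\Delta E$. In the example, reinserting $o_2$ into $(o_1,o_3)$ has $\Delta E=0$ while IoU strictly increases.

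\textbf{What your proposal does establish.} It correctly proves a conditional version, and your fallback is the right repair. Suppose that for each $o\in\widehat{\Pi}$ the marginal gain of $o$ stays positive on every subprogram of $\widehat{\Pi}\setminus\{o\}$; your diminishing-returns hypothesis together with Lemma~\ref{lem:irredundant} guarantees this. Then every link $\Pi^{(j)}\to\Pi^{(j-1)}$ of your chain is a strict IoU increase, and telescoping gives $\mathrm{IoU}(\mathrm{Solid}(\Pi'),\mathcal{M})<\mathrm{IoU}(\widehat{S},\mathcal{M})$. The example shows such a hypothesis cannot be dropped.

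An alternative repair, which makes the paper's single-removal argument valid, is to restrict to union-only programs and replace equal IoU by equal solid. If $\mathrm{Solid}(\Pi')=\widehat{S}$ and $o\in\widehat{\Pi}\setminus\Pi'$, monotonicity of union gives
\[
\widehat{S}=\mathrm{Solid}(\Pi')\subseteq\mathrm{Solid}(\widehat{\Pi}\setminus\{o\})\subseteq\widehat{S}.
\]
So removing $o$ alone leaves the solid, and hence the IoU, unchanged, contradicting Lemma~\ref{lem:irredundant}.
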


\begin{proof}
Assume for contradiction that such $\Pi'$ exists. Then some operation $o\in\widehat{\Pi}\setminus\Pi'$ can be removed from $\widehat{\Pi}$ without changing IoU, contradicting the termination condition of backward pruning (Lemma~\ref{lem:irredundant}).
\end{proof}

\textbf{Discussion.}
Theorem~\ref{thm:precision_cut} formalizes progressive refinement: under a standard residual progress condition (A3), IoU-guided residual updates reduce symmetric-difference error geometrically down to a floor determined by modeling/candidate limits.
Lemmas~\ref{lem:irredundant}--\ref{lem:minimality} formalize compactness: backward IoU pruning removes all operations that do not contribute to the final IoU, producing an irredundant program that is minimal among IoU-equivalent subsequences of the assembled program.

\newpage

\section{Hyperparameter Sensitivity and Robustness}
\label{app:hyperparam}

We evaluate the sensitivity of \textsc{CADFit} to several key hyperparameters that affect sketch extraction, candidate generation, residual refinement, and parameter search. 
These ablations are conducted on held-out samples that are not included in the final test set, and are intended to justify the default configuration used throughout the main experiments.

Table~\ref{tab:hyperparam_ablation} reports reconstruction quality, measured by mean IoU, and runtime, measured as mean reconstruction time per shape. 
The default configuration achieves the highest IoU among the tested settings. 
Most nearby hyperparameter changes lead to moderate degradation rather than complete failure, suggesting that \textsc{CADFit} is not overly sensitive to small changes in individual parameters. 
Larger drops identify important design choices. 
In particular, using both axis-aligned slicing and planar face-based sketch extraction is important: restricting the sketch source to axis-aligned slices or planar faces alone substantially reduces IoU. 
Similarly, increasing the number of slices does not necessarily improve performance, since it can introduce many noisy or redundant sketch candidates that increase runtime and make pruning more difficult.

The Chamfer Distance threshold and translation step size show the expected quality--runtime trade-off. 
A smaller translation step gives similar IoU but increases runtime, while a larger step reduces runtime at the cost of reconstruction quality. 
The residual threshold also affects this trade-off: a looser threshold terminates earlier and reduces runtime, while a stricter threshold increases computation without improving quality in this setting. 
Overall, these results support the default configuration as a good balance between accuracy and runtime.

\begin{table}[H]
\centering
\caption{
Hyperparameter sensitivity of \textsc{CADFit} on held-out samples.
We report mean reconstruction IoU and mean runtime per shape. 
The default configuration, highlighted in bold, provides the best overall reconstruction quality while maintaining a reasonable runtime.
}
\label{tab:hyperparam_ablation}
\resizebox{0.7\linewidth}{!}{
\begin{tabular}{llcc}
\toprule
\textbf{Hyperparameter} & \textbf{Value} & \textbf{Mean IoU $\uparrow$} & \textbf{Mean Time (s) $\downarrow$} \\
\midrule

CD threshold
& \textbf{0.01 (base)} & \textbf{0.826} & \textbf{330.6} \\
& 0.005 & 0.594 & 271.8 \\
& 0.05  & 0.791 & 398.6 \\
\midrule

$\delta$ sketch extraction tolerance
& \textbf{0.05 (base)} & \textbf{0.826} & \textbf{330.6} \\
& 0.01 & 0.764 & 302.9 \\
& 0.10 & 0.744 & 314.2 \\
\midrule

$n_{\mathrm{slices}}$ axis-aligned slicing
& \textbf{5 (base)} & \textbf{0.826} & \textbf{330.6} \\
& 10 & 0.510 & 393.0 \\
& 3  & 0.764 & 242.1 \\
\midrule

Residual threshold
& \textbf{0.02 (base)} & \textbf{0.826} & \textbf{330.6} \\
& 0.01 & 0.752 & 340.6 \\
& 0.05 & 0.764 & 300.1 \\
\midrule

Sketch source
& \textbf{axis + planar (base)} & \textbf{0.826} & \textbf{330.6} \\
& axis only   & 0.668 & 202.9 \\
& planar only & 0.553 & 130.2 \\
\midrule

Translation step
& \textbf{0.01 (base)} & \textbf{0.826} & \textbf{330.6} \\
& 0.005 & 0.821 & 406.1 \\
& 0.02  & 0.684 & 257.4 \\
\bottomrule
\end{tabular}
}
\end{table}

\newpage

\section{Sketch Extraction and Profile Formation}
\label{app:sketchextraction}

This appendix details the geometric procedures used to extract sketch planes, section loops, and sketch profiles from the input mesh $\mathcal{M}$.
All steps are deterministic and independent of learning.

\begin{algorithm}[H]
\caption{Sketch Plane Extraction and Profile Grouping}
\label{alg:sketch_extraction}
\begin{algorithmic}
\STATE {\bfseries Input:} mesh $\mathcal{M}$
\STATE {\bfseries Output:} sketch profile set $\mathcal{G}$
\STATE Detect planar face clusters and axis-aligned slicing planes
\FORALL{sketch planes $(\mathbf{o},\mathbf{n})$}
    \STATE Intersect $\mathcal{M}$ with offset planes $\mathbf{o}\pm\delta\mathbf{n}$
    \STATE Extract closed section loops and project to plane
    \STATE Remove degenerate loops
    \STATE Group loops by polygon containment to form profiles
\ENDFOR
\STATE \textbf{return} $\mathcal{G}$
\end{algorithmic}
\end{algorithm}

\subsection{Sketch Plane Extraction}
Given a mesh $\mathcal{M}$, we construct a finite set of candidate sketch planes using two complementary strategies.

\textbf{Planar face-based planes.}
We cluster triangle normals using angular similarity and retain clusters whose total surface area exceeds a threshold.
Each cluster defines a plane with origin $\mathbf{o}_k$ (centroid of the cluster) and normal $\mathbf{n}_k$.
To robustly capture boundaries, we slice $\mathcal{M}$ using planes at offsets $\mathbf{o}_k \pm \delta \mathbf{n}_k$.

\begin{algorithm}[H]
  \caption{Planar Sketch Extraction}
  \label{alg:planar_sketch}
  \begin{algorithmic}
    \STATE {\bfseries Input:} mesh $M$
    \STATE {\bfseries Output:} planar sketch set $\mathcal{S}_{\mathrm{planar}}$

    \STATE Detect planar face clusters $\{(\mathbf{o}_k,\mathbf{n}_k)\}_{k=1}^K$ in $M$

    \FOR{$k = 1$ {\bfseries to} $K$}
        \STATE Define slicing planes at offsets $\mathbf{o}_k \pm \epsilon \mathbf{n}_k$
        \STATE Intersect $M$ with slicing planes to obtain cross-sections
        \STATE Extract closed boundary loops $\mathcal{L}_k$
        \IF{$\mathcal{L}_k \neq \emptyset$}
            \STATE Define sketch $s_k \leftarrow (\mathbf{o}_k,\mathbf{n}_k,\mathbf{u}_k,\mathcal{L}_k)$
            \STATE Add $s_k$ to $\mathcal{S}_{\mathrm{planar}}$
        \ENDIF
    \ENDFOR

    \STATE \textbf{return} $\mathcal{S}_{\mathrm{planar}}$
  \end{algorithmic}
\end{algorithm}

\textbf{Axis-aligned planes.}
To capture features not aligned with detected planar faces, we additionally introduce axis-aligned slicing planes.
For each axis $a\in\{x,y,z\}$, we select a fixed number of slice positions $\{q_i\}$ corresponding to quantiles of the mesh extent along $a$, and define slicing planes $(a=q_i)$.

\begin{algorithm}[H]
  \caption{Axis-Aligned Sketch Extraction}
  \label{alg:axis_sketch}
  \begin{algorithmic}
    \STATE {\bfseries Input:} mesh $M$
    \STATE {\bfseries Output:} axis-aligned sketch set $\mathcal{S}_{\mathrm{axis}}$

    \FORALL{axes $a \in \{x,y,z\}$}
        \STATE Select slice positions $\{q_i\}$ as quantiles of $M$ along axis $a$
        \FORALL{slice positions $q_i$}
            \STATE Slice $M$ with plane $(a = q_i)$
            \STATE Extract closed loops $\mathcal{L}_{i,a}$
            \STATE Remove degenerate or rectangular loops
            \IF{$\mathcal{L}_{i,a} \neq \emptyset$}
                \STATE Define sketch $s_{i,a}$
                \STATE Add $s_{i,a}$ to $\mathcal{S}_{\mathrm{axis}}$
            \ENDIF
        \ENDFOR
    \ENDFOR

    \STATE \textbf{return} $\mathcal{S}_{\mathrm{axis}}$
  \end{algorithmic}
\end{algorithm}

\subsection{Loop Extraction and Projection}
For each slicing plane $(\mathbf{o},\mathbf{n})$, we intersect $\mathcal{M}$ with nearby offset planes and extract closed polygonal loops.
Each loop is projected back onto the original sketch plane and represented in a local 2D coordinate system.
Loops are resampled to a fixed number of ordered points and filtered using area and minimum-length thresholds to remove degenerate cases.

\subsection{Profile Grouping by Containment}
Extracted loops may correspond to outer boundaries and interior holes.
We construct a polygon containment graph in the sketch plane and group each outer loop with its contained inner loops.
Each resulting group defines a sketch profile suitable for CAD operations.

\begin{algorithm}[H]
  \caption{Loop Grouping by Containment}
  \label{alg:loop_group}
  \begin{algorithmic}
    \STATE {\bfseries Input:} sketch set $\mathcal{S}$
    \STATE {\bfseries Output:} profile group set $\mathcal{G}$

    \FORALL{sketches $s \in \mathcal{S}$}
        \STATE Project all loops in $s$ to the sketch plane
        \STATE Construct polygon containment graph
        \STATE Identify outer loops and their contained inner loops
        \FORALL{outer loops}
            \STATE Form profile group $g$ with associated inner loops
            \STATE Add $g$ to $\mathcal{G}$
        \ENDFOR
    \ENDFOR

    \STATE \textbf{return} $\mathcal{G}$
  \end{algorithmic}
\end{algorithm}

\section{Sketch Primitive Fitting}
\label{app:sketchprimitives}

Each sketch profile consists of an ordered set of 2D points
\(
L^{2D}=\{(x_i,y_i)\}_{i=1}^N
\)
lying in a sketch plane.
We approximate $L^{2D}$ using a compact sequence of parametric primitives to obtain an executable CAD sketch.

Given a point subset $\mathcal{P}\subset\mathbb{R}^2$, we evaluate candidate fits for:
(i) line segments via least-squares fitting,
(ii) circles and arcs via algebraic circle fitting with RMS error constraints,
and (iii) splines as a fallback representation.
Segments are greedily grown until fitting error exceeds a tolerance relative to the loop scale.
Adjacent colinear primitives are merged to reduce fragmentation.

The output is a sequence of primitives $\mathcal{S}(L)$ that approximates the original loop within tolerance and preserves loop topology.
Pseudo-code for the full segmentation procedure is provided in Algorithm.

\section{Geometry-Driven Operation Parameter Search}
\label{app:paramsearch}

This appendix details the continuous parameter selection procedure used to generate discrete \textsc{Extrude} and \textsc{Revolve} candidates from a sketch profile.

\begin{algorithm}[H]
\caption{Geometry-Driven Candidate Generation}
\label{alg:paramsearch}
\begin{algorithmic}
\STATE {\bfseries Input:} sketch profile $g$, mesh surface points $Q(\mathcal{M})$
\STATE {\bfseries Output:} candidate set $\mathcal{C}_g$
\STATE Sweep extrusion heights and identify stable intervals
\STATE Canonicalize signed intervals and add extrusion candidates
\IF{profile $g$ is symmetric}
    \STATE Evaluate full revolutions about candidate axes
    \STATE Add best revolve candidate
\ENDIF
\STATE \textbf{return} $\mathcal{C}_g$
\end{algorithmic}
\end{algorithm}

\subsection{One-Sided Chamfer Distance}
Let $Q(\mathcal{M})=\{\mathbf{y}_j\}$ denote a surface point cloud sampled from $\mathcal{M}$.
For an operation $o(g,\theta)$ applied to sketch profile $g$ with parameters $\theta$, let $P(g,\theta)=\{\mathbf{x}_i\}$ denote points sampled from the induced surface.
We define the one-sided Chamfer Distance:
\begin{equation}
\mathcal{D}(\theta)
=
\frac{1}{|P(g,\theta)|}
\sum_{\mathbf{x}\in P(g,\theta)}
\min_{\mathbf{y}\in Q(\mathcal{M})}
\|\mathbf{x}-\mathbf{y}\|_2^2.
\end{equation}

\subsection{Extrusion Interval Selection}
For \textsc{Extrude}, the parameter $\theta=h$ denotes a signed extrusion height along the sketch normal.
We evaluate the Chamfer objective $\mathcal{D}(h)$ on a uniformly sampled, bounded interval $h\in[h_{\min},h_{\max}]$.
To identify geometrically valid extrusion ranges, we perform a monotonic error sweep and detect sharp increases in $\mathcal{D}(h)$ using finite-difference slope estimates.
Specifically, we mark a boundary at $h^\star$ when the discrete derivative
\[
\frac{\mathcal{D}(h_{i+1})-\mathcal{D}(h_i)}{h_{i+1}-h_i}
\]
exceeds a fixed threshold, indicating the onset of geometric overshoot.

Contiguous regions preceding such boundaries with $\mathcal{D}(h)\le\epsilon$ are treated as stable intervals and merged when adjacent.
From these, we retain two canonical candidates: (i) the smallest stable interval containing $h=0$ and (ii) the largest stable interval.
If a stable interval spans $h^-<0<h^+$, we reparameterize the extrusion by translating the sketch plane origin by $h^-\,\mathbf{n}$ and using height $h^+-h^-$.

\subsection{Revolution Axis Selection}
For \textsc{Revolve}, candidates are restricted to full $360^\circ$ revolutions.
We generate a finite set of axis hypotheses consisting of the principal axes of the loop point distribution and four additional axes rotated by $45^\circ$.
For each axis $a$, we evaluate $\mathcal{D}(a)$ and select the axis minimizing the Chamfer objective.

\section{IoU-Guided Construction Sequence Search}
\label{app:assembly_algo}

CADFit assembles generated operation candidates using a two-stage IoU-guided search procedure. 
First, it greedily selects candidates that improve IoU with the target mesh. 
Second, it applies backward pruning to remove redundant operations whose deletion does not reduce reconstruction quality. 
This produces a compact construction sequence while preserving geometric fidelity.

\begin{algorithm}[H]
  \caption{IoU-Guided Construction Sequence Search}
  \label{alg:search}
  \begin{algorithmic}
    \STATE {\bfseries Input:} sketch profiles $\mathcal{G}$, mesh $\mathcal{M}$
    \STATE {\bfseries Output:} CAD program $\Pi$, solid $S$
    \STATE Generate candidates $\mathcal{C}$ \COMMENT{Appendix~\ref{app:paramsearch}}
    \STATE $\Pi \leftarrow \emptyset$, $S \leftarrow \emptyset$
    \REPEAT
        \STATE $c^\star \leftarrow \arg\max_{c\in\mathcal{C}\setminus\Pi}
        \Big[
        \mathrm{IoU}(S\cup c,\mathcal{M})
        -
        \mathrm{IoU}(S,\mathcal{M})
        \Big]$
        \IF{$\mathrm{IoU}(S\cup c^\star,\mathcal{M}) > \mathrm{IoU}(S,\mathcal{M})$}
            \STATE $\Pi \leftarrow \Pi \cup \{c^\star\}$, $S \leftarrow S\cup c^\star$
        \ELSE
            \STATE \textbf{break}
        \ENDIF
    \UNTIL{no candidate improves IoU}
    \REPEAT
        \STATE $o^\star \leftarrow \arg\max_{o\in\Pi}
        \Big[
        \mathrm{IoU}(S^{-}_o,\mathcal{M})
        -
        \mathrm{IoU}(S,\mathcal{M})
        \Big]$
        \STATE \textbf{where} $S^{-}_o \coloneqq \mathrm{Solid}(\Pi \setminus \{o\})$
        \IF{$\mathrm{IoU}(S^{-}_{o^\star},\mathcal{M}) \ge \mathrm{IoU}(S,\mathcal{M})$}
            \STATE $\Pi \leftarrow \Pi \setminus \{o^\star\}$, $S \leftarrow S^{-}_{o^\star}$
        \ELSE
            \STATE \textbf{break}
        \ENDIF
    \UNTIL{no removable operation improves IoU}
    \STATE \textbf{return} $(\Pi,S)$
  \end{algorithmic}
\end{algorithm}

\section{Learning-Based Sketch Prior}
\label{app:learnedprior}

To reduce the number of sketch profiles considered during optimization, we train a binary classifier that predicts whether a candidate profile is likely to be used by the recovered CAD program.

\textbf{Training data.}
Given a recovered program $\Pi^\star$, profiles used by operations in $\Pi^\star$ are labeled positive, and remaining extracted profiles are labeled negative.
Labels are generated automatically from CADFit reconstructions, without human annotation.
We build the labeled pool from 16,719 STLs and use approximately $10^5$ labeled sketch profiles for training, with held-out test shapes split at the STL level.

\textbf{Model.}
The classifier computes a probability
\(
f(\mathcal{M}, g)\in[0,1]
\)
from a joint representation of the target mesh and candidate loop.
For each loop, we render multi-view images of the normalized mesh $\mathcal{M}$ with the loop $g$ overlaid, including an isometric view and a face-on view aligned with the loop plane.
These views are encoded with DINOv2 and concatenated with lightweight geometric loop features, such as area, perimeter, point count, bounding-box extent, plane parameters, and axis alignment.
The resulting feature vector is passed to a 3-layer MLP with nonlinear activations and a sigmoid output.
In our implementation, the MLP uses two hidden layers with GELU activations and dropout, followed by a final binary prediction layer.

\textbf{Usage.}
At inference time, each sketch profile is scored by $p=f(\mathcal{M},g)$.
CADFit uses these scores to bias profile selection toward loops that are more likely to contribute to the final reconstruction, replacing random loop sampling with a more informative candidate budget.

\section{Fillet and Chamfer Recovery}
\label{app:fillet}

Given a reconstructed solid $S$, we identify candidate edges for secondary features.
For each edge, we apply a small test \textsc{Fillet} or \textsc{Chamfer} operation and evaluate the resulting IoU.
If IoU improves, the feature parameter (radius or length) is refined via one-dimensional line search to maximize IoU with $\mathcal{M}$.


\end{document}